\theoremstyle{plain}
\newtheorem{property}{Property}[section]
\newtheorem{theorem}{Theorem}[section]
\newtheorem{lemma}[theorem]{Lemma}
\theoremstyle{definition}
\theoremstyle{remark}
\newcommand{\dominanceindex}{\bar{S}}
\newcommand \multinomial {\mathcal{M}(F, N)}
\newcommand \averageswapdistance {\left< d \right>}
\newcommand \nmax {7}
\begin{document}

\title{Swap distance minimization beyond entropy minimization in word order variation}

\iftoggle{anonymous}{}
{
  \author{
  \name{Víctor Franco-Sánchez\textsuperscript{a}, Arnau Martí-Llobet\textsuperscript{b} and Ramon Ferrer-i-Cancho\textsuperscript{c}\thanks{CONTACT Ramon Ferrer-i-Cancho. Email: rferrericancho@cs.upc.edu}}
  \affil{\textsuperscript{a}Departament de Matemàtiques, Universitat Polit\`ecnica de Catalunya (UPC), Barcelona, Spain. ORCiD: 0009-0007-4163-2476;
  \textsuperscript{b}Universitat Polit\`ecnica de Catalunya (UPC), Barcelona, Spain; 
  \textsuperscript{c}Quantitative, Mathematical and Computational Linguistics Research Group, Departament de Ci\`encies de la Computaci\'o, Universitat Polit\`ecnica de Catalunya (UPC), Barcelona, Spain. ORCiD: 0000-0002-7820-923X}
  }
}

\maketitle
    
\begin{abstract}
    Consider a linguistic structure formed by $n$ elements, for instance, subject, direct object and verb ($n=3$) or subject, direct object, indirect object and verb ($n=4$). We investigate whether the frequency of the $n!$ possible orders is constrained by two principles. First, entropy minimization, a principle that has been suggested to shape natural communication systems at distinct levels of organization. Second, swap distance minimization, i.e. a preference for word orders that require fewer swaps of adjacent elements to be produced from a source order. We present average swap distance, a novel score for research on swap distance minimization. 
    We find strong evidence of pressure for entropy minimization and swap distance minimization with respect to a die rolling experiment in distinct linguistic structures with $n=3$ or $n=4$.
    Evidence from a Polya urn process is strong for $n=4$ but weaker for $n=3$. We still find evidence consistent with the action of swap distance minimization when word order frequencies are shuffled, indicating that swap distance minimization effects go beyond pressure to reduce word order entropy.
\end{abstract}

\begin{keywords}
word order; entropy minimization; swap distance minimization
\end{keywords}



\section{Introduction}

\label{sec:introduction}

Languages employ syntactic structures to communicate. Some examples are the SOV structure, formed by a subject (S), an object (O) and a verb (V) \citep{wals-81,Hammarstroem2016a} and the nAND structure, a noun phrase consisting of a noun (n), an adjective (A), a numeral (N) and a demonstrative (D) \citep{Dryer2018a,Culbertson2020a}. DNAn is the typical order of such a noun phrase in English, as in ``These three black horses''.
A structure formed by $n$ elements has $n!$ possible orders ($n=3$ for SOV and $n=4$ for nAND). 
Here we investigate the constraints that operate on the $n!$ permutations of a syntactic structure as a statistical mechanics problem. First we review the principles that may constrain word order variation in that setting.  

{\em The principle of} entropy minimization has been argued to shape natural communication systems \citep{Ferrer2015b,Ferrer2013f}. 
The minimization of word entropy puts pressure to reduce the effective vocabulary size \citep{Ferrer2015b} and is one of the main ingredients of models that shed light on the origins of Zipf's law for word frequencies \citep{Ferrer2004e}, the tendency of more frequent words to be older in a language \citep{Casado2021b}, and vocabulary learning in children \citep{Ferrer2013g,Carrera2021a}. 
The principle of word entropy minimization is justified by word frequency effects, namely the higher mental accessibility of more frequent words (see \citet{Jescheniak1994a,Dahan2001a} and references therein). Such accessibility is maximized when only one word can be produced as then the frequency of the only word is maximum, that is, when entropy reaches its minimum value. When all words have the same probability (which is a small number if the number of word orders is sufficiently large), entropy yields its maximum value.  

The principle of word entropy minimization can be extended to blocks of elements that form syntactic structures.
We define $p_i$ as the probability of the $i$-th order of a structure of $n$ elements. Then, $H$, the entropy of the possible orders of a structure, is defined as
\begin{eqnarray*}
H = - \sum_{i=1}^{n!} p_i \log p_i.
\label{eq:entropy}
\end{eqnarray*}
Block entropy minimization is justified in terms of frequency effects on blocks of words: word combinations that appear more often are easier to learn and to process \citep{Contreras2022a}. This block entropy minimization predicts that using only one of the permutations is optimal. When applied to the SOV structure, it predicts that only one order, say SVO, has non-zero probability. This prediction matches to some degree
the linguistic notion of canonical or basic order, namely the usual order in a language under certain conditions \citep{Comrie1989a} 
or the suspected subjacent word order in case the usual order is not manifested \citep{Chomsky1965}. 
Here we aim to test whether the order of various kinds of syntactic structures is shaped by block entropy minimization. 

The principle of word entropy minimization has been extended to the linear order of words in the sentence by the notion of conditional entropy \citep{Ferrer2013f}. In simple terms, a word whose processing cost needs to be reduced should be preceded by words that reduce its uncertainty. This principle of conditional entropy minimization has been used to predict the optimal placement of head words with respect to their dependents. An example is the nAND structure. The principle of conditional entropy minimization predicts that the head should be placed first or last and this has been confirmed experimentally in conditions where that principle is less likely to suffer interference by other word order principles \citep{Ferrer2023b,Ferrer2019a}.

\begin{figure}
\caption{\label{fig:word_order_permutation_ring} The word order permutation ring for the structure SOV, that is an instantiation of the permutohedron of order $3$. The number below each order indicates the swap distance to SOV. }
\centering
\includegraphics[width=0.5\textwidth]{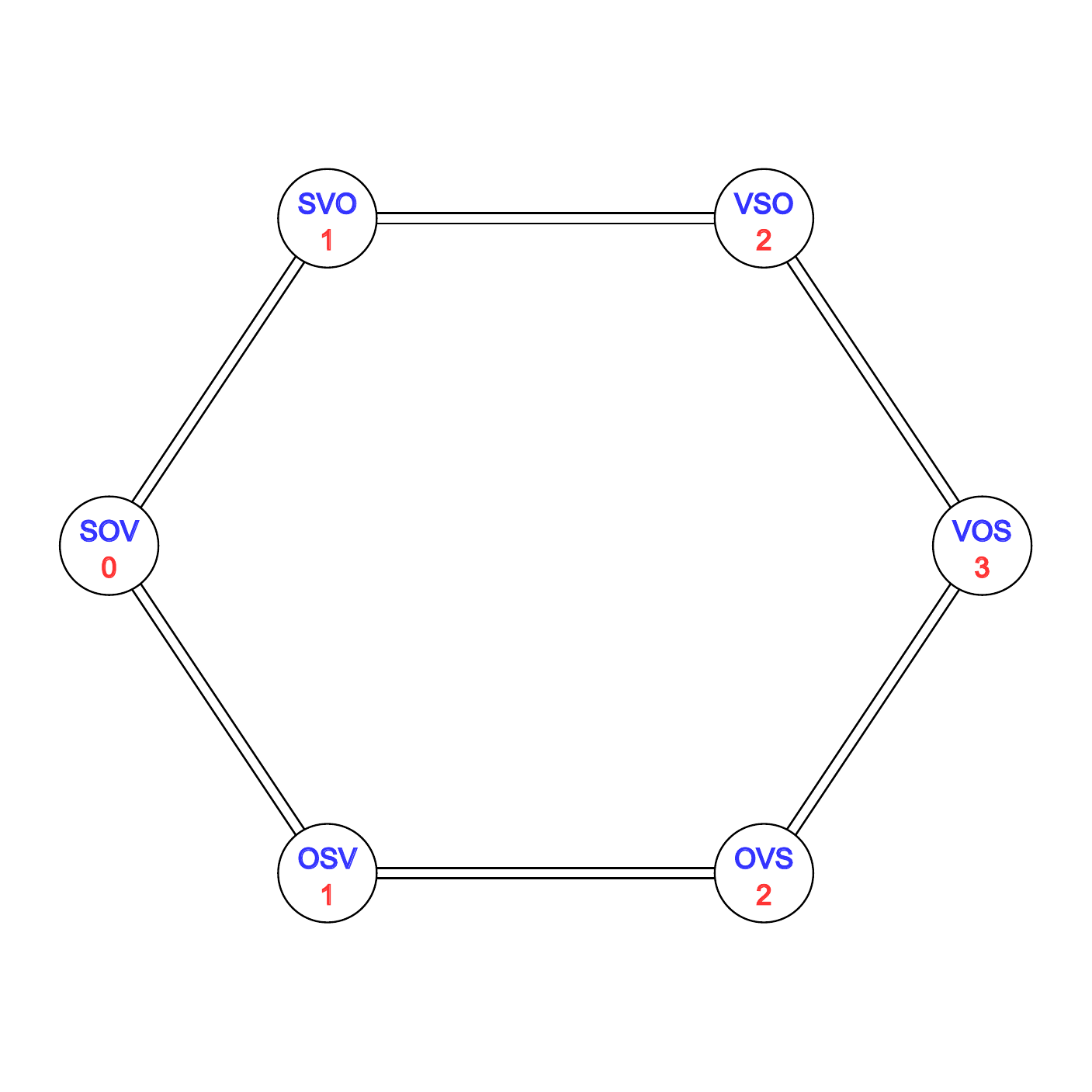}
\end{figure}

The principle of word entropy minimization predicts a tendency to reduce the effective size of the set of orders manifested but it is neutral concerning how orders other than the most frequent word order are going to be selected. Here we will confront entropy minimization over all permutations of a word order sequence against another word order principle that also predicts a preference for a canonical word order: swap distance minimization \citep{Ferrer2023a}. That principle states that given a certain word order (e.g., the canonical word order), variations that involve a smaller number of swaps of adjacent elements are cognitively easier \citep{Ferrer2023a}. Consider the case of the order of S, O and V. SOV requires just one swap of adjacent constituents to produce SVO (the swap of OV) or to produce OSV (the swap of SO), two swaps of adjacent constituents to become VSO or OVS and three swaps of adjacent constituents to produce VOS (Figure \ref{fig:word_order_permutation_ring}). It has been shown that the cognitive cost of processing an order of S, O and V is significantly correlated with its swap distance to the canonical order in languages whose canonical order is SOV \citep{Ferrer2023a}. The graph in Figure \ref{fig:word_order_permutation_ring} is an instance of the permutohedron, a graph where vertices are the orders of a syntactic structure and an edge joining two vertices indicates that one vertex produces the other vertex by swapping a couple of adjacent word orders. The swap distance between two orders is their distance in the permutohedron. 

Here we introduce a new score for measuring the effect of swap distance minimization when the source order is unknown, unclear, or there may be multiple source orders from which word order variation is produced. This novel score, that we call average swap distance, is defined as 
\begin{eqnarray}
\averageswapdistance = \sum_{i=1}^{n!} \sum_{j=1}^{n!} d_{ij} p_i p_j,
\label{eq:average_swap_distance}
\end{eqnarray}
where $d_{ij}$ is the swap distance between permutations $i$ and $j$.

The remainder of this article is organized as follows. The mathematically oriented reader is invited to check a series of technical appendices. 
In this article, we use the terms permutation and order (sequential order) interchangeably.   
A sequence of $n$ elements has $N=n!$ possible orderings. $N$ is also the number of vertices of the permutohedron.
The diversity indices are presented in greater mathematical detail in Appendix \ref{app:diversity_indices}. The graph theory and the combinatorics of the permutohedron are presented in Appendix \ref{app:theory}. 
The mathematical properties of $\averageswapdistance$ are analyzed in Appendix \ref{app:mathematical_properties_of_average_swap_distance}, focusing on its range of variation (lower and upper bounds). For the general reader, the essential result is that $0 \leq \averageswapdistance \leq \averageswapdistance_{max}$ where 
\begin{equation}
\averageswapdistance_{max} = \frac{n(n-1)}{4}
\label{eq:average_swap_distance_max_main_text}
\end{equation}
at least for $n \leq \nmax$.
$\averageswapdistance$ shares some mathematical properties with $H$: it is minimized when only one order is used and it is maximized when all words are equally likely.
The expectations of $H$ and $\averageswapdistance$ under distinct null hypotheses are investigated in Section \ref{app:null_hypotheses}. For the general reader, the essential results are presented next. 
The expectation of $\averageswapdistance$ under the null hypothesis of die rolling ($dr$), i.e.  rolling a fair die to produce the orders, is 
\begin{equation}
\averageswapdistance_{dr} =  \frac{F-1}{F} \frac{n(n-1)}{4},
\label{eq:expected_average_swap_distance_die_rolling_main_text}
\end{equation}
where $F$ is the total frequency of the word orders, that is   
\begin{equation}
F = \sum_{i=1}^{n!} f_i,
\label{eq:total_frequency}
\end{equation}
where $f_i$ is the frequency of the $i$-th word order.

The expected value of $\averageswapdistance$ under the null hypothesis of a random permutation ($rp$), i.e. shuffling the actual word order frequencies, is 
\begin{eqnarray}
\averageswapdistance_{rp} = \dominanceindex \frac{n!}{n!-1}\frac{n(n-1)}{4},
\label{eq:expected_average_swap_distance_random_permutation_main_text}
\end{eqnarray}
where $\dominanceindex = 1 - S$ is the so-called dominance index and $S$ is the Simpson index, that is defined as \citep{Sommerfield2008a}
\begin{eqnarray}
S= \sum_{i=1}^{n!} p_i^2.
\label{eq:Simpson_index}
\end{eqnarray}
In that setting, entropy remains constant. The random permutation null hypothesis aims to test for the presence of swap distance minimization beyond block entropy minimization taking into account that the frequency of a word order is likely to be determined by processing, learning and evolutionary constraints that are independent of swap distance minimization or even compete with it 
\citep{Levshina2023a,Ferrer2013f,Motamedi2022a}. That null hypothesis address a key question: if we preserve the frequency of the word orders, namely we preserve the magnitude of those constraints as reflected in the distribution of word order frequencies, can we still observe a preference for swap distance minimization? 

In the following sections, the principles above are investigated in two scenarios: across dominant or preferred orders in language and also, as a pilot study, within a few languages so as to pave the way for a future large-scale analysis on large ensembles of languages.
Section \ref{sec:materials} presents the syntactic structures and the datasets that will be employed to investigate the effect of entropy minimization and swap distance minimization in syntactic structures with $n=3$ or $n=4$ with $H$ and $\averageswapdistance$. 
Section \ref{sec:methods} presents the null hypotheses and models that will be used to test for the manifestation of entropy minimization and swap distance minimization in languages.
Section \ref{sec:results} shows statistically significant effects of block entropy minimization and swap distance minimization and reveals swap distance minimization effects even when the entropy of orders remains constant, namely swap distance minimization captures constraints on word order that escape entropy minimization. 
Section \ref{sec:discussion} discusses the strength of entropy minimization and swap distance minimization effects and the depth of swap distance minimization.  

\subsection{A word of caution}

$H$ and $\averageswapdistance$ are ways of measuring the skewness of a distribution. It is already well-known that word order distributions are skewed \citep{Cysouw2010a, Ferrer2024b}. The key contribution of this article is to demonstrate that languages prefer distributions that, in addition to being skewed, concentrate probability on word orders that are close in the permutohedron as predicted by the principle of swap distance minimization. The main objective of the article is not descriptive, namely investigating the skewness of the word order distributions but rather theoretical, testing the hypothesis that the principles of entropy minimization and swap distance minimization shape languages and understanding the extent to which the latter surpasses the former. Description comes as a side-effect of our analysis. Furthermore, these forces must be understood as not acting in isolation. Thus one cannot expect {\em a priori} that languages minimize, in the sense of reaching the mathematical minimum of the corresponding statistics, e.g., giving for instance $H=0$, the minimum value of $H$, that is achieved when only one order is used. There are competing constraints and mechanisms that are relevant but are omitted for the sake of simplicity in our analyses (see for instance the discussion on the conflict between entropy minimization and mutual information maximization in word order frequencies \citep{Bentz2017a} or between entropy minimization and dependency distance minimization \citep{Ferrer2013f}). Therefore, when later on we claim that we have found  evidence of entropy minimization or swap distance minimization in languages, we mean that we have found a value of the corresponding statistic that is significantly small, which suffices as evidence of the action of these principles. Thus, such evidence does not imply that the corresponding score has reached its theoretical minimum in a mathematical sense.

\section{Materials}

\label{sec:materials}

The syntactic structures we consider are 
\begin{itemize}
\item 
SOV, formed by subject (S), direct object (O) and verb (V).
\item
VOX, formed by verb (V), direct object (O) and oblique (X).
\item
OVI, formed by direct object (O), verb (V) and indirect object (I).
\item 
SOVI, formed by subject (S), direct object (O), verb (V) and an indirect object (I).
\item
nAND, that is noun phrases formed by a noun (n), an adjective (A), a numeral (N) and a demonstrative (D).
\end{itemize}
Figure \ref{fig:permutohedra_of_linguistic_structures} shows permutohedra for SOVI and nAND structures.

Given a certain syntactic structure, languages reflect a subset of possible orders. For instance, the structure SOV has six possible orders (Figure \ref{fig:word_order_permutation_ring}) while SOVI and nAND have 24 possible orders.
Among the possible orders, there is often one that is dominant or preferred. For instance, concerning SOV structures, SVO order is the dominant order for Mandarin Chinese while SOV order is the preferred order for Urdu \footnote{\url{https://wals.info/languoid/lect/wals_code_urd}}, Hindi \citep{McGregor1977a} and Malayalam \citep{Asher_and_Kumari_1997a}.

We obtain the frequency of a word order from two kinds of sources: the frequency of a dominant or preferred order in a collection of languages or the frequency of each order in a corpus of a specific language. 


A summary of the features of the whole dataset is replicated in Tables \ref{tab:statistical_summary_entropy} and \ref{tab:statistical_summary_average_swap_distance}: the database, the kind of frequency (if the frequency is a corpus frequency then the language is also indicated), the sequence length $n$ of the structure, the structure, the unit of measurement of frequency and a series of basic statistics: $F$, the total frequency, $m$, the number of non-zero probability orders, $\dominanceindex$, the dominance index, $H$, the entropy 
(only in Table \ref{tab:statistical_summary_entropy}) and $\averageswapdistance$, the average swap distance (only in Table \ref{tab:statistical_summary_average_swap_distance}). $F$ indicates the dataset size. When the kind of frequency is a dominant or preferred order, $F$ indicates the number of languages, genera or families; when the kind of frequency is a corpus frequency, $F$ indicates the number of syntactic structures in the corpus. Further details about each dataset are presented below.

\begin{figure}
\caption{\label{fig:permutohedra_of_linguistic_structures} The permutohedra of order $4$ that result from the SOVI structure (top) and the nAND structure (bottom). 
Vertex or edge colors reflect their weight.
For a vertex $i$, the weight is $p_i$, the relative frequency of the word order. If $p_i = 0$ then the color is white. If $p_i > 0$ then the intensity of blue reflects $p_i$. The weight of an edge joining vertices $i$ and $j$ is $p_i p_j$. 
For the SOVI structure, the frequency of each order in a corpus of Hindi-Urdu is borrowed from Table 1 of \citep{Leela2016a} excluding CP constituents.
For the nAND structure, the frequency of each dominant order is its adjusted number of languages according to \citet{Dryer2018a}.  
}
\centering
\includegraphics[width = 0.6 \textwidth]{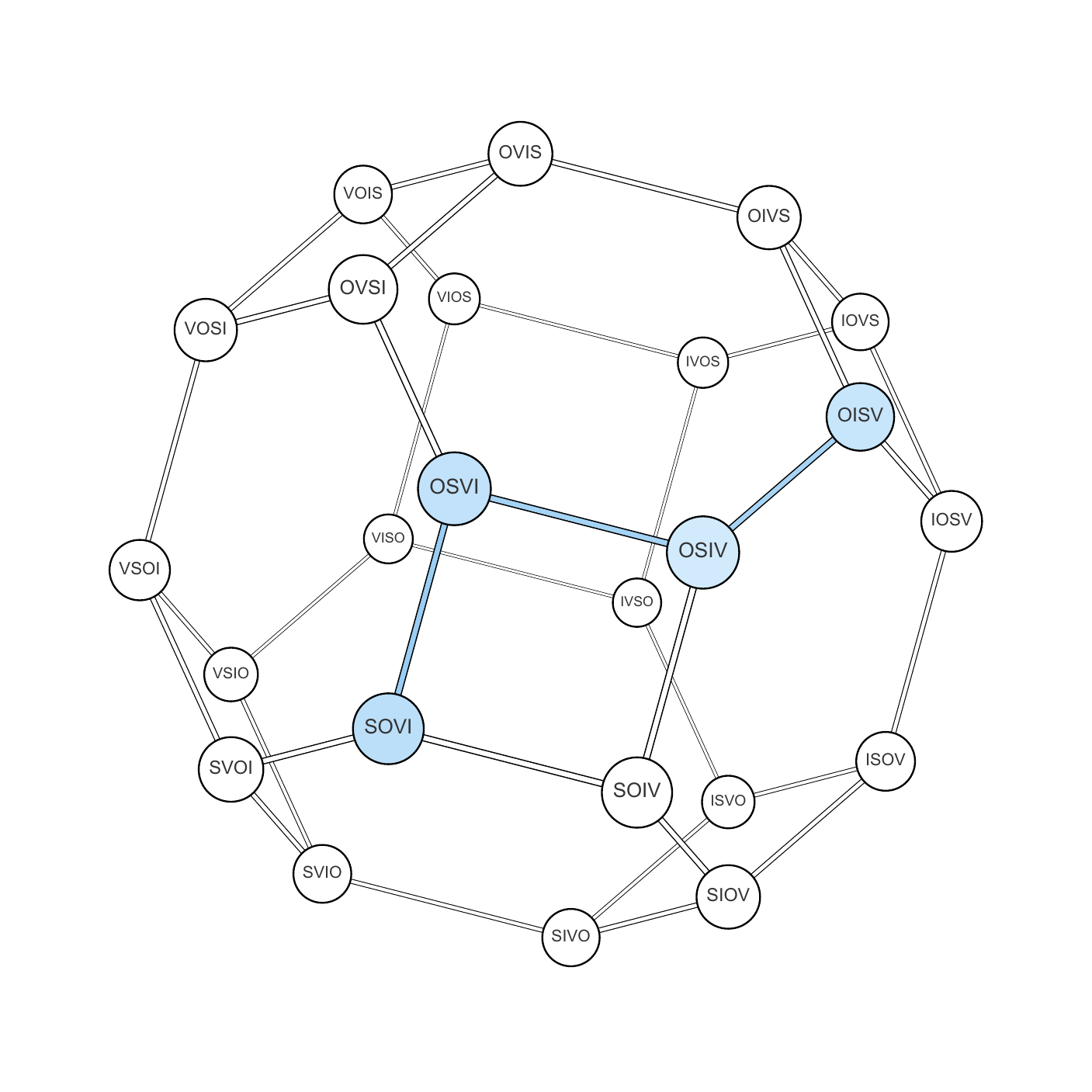}\\
\includegraphics[width = 0.6 \textwidth]{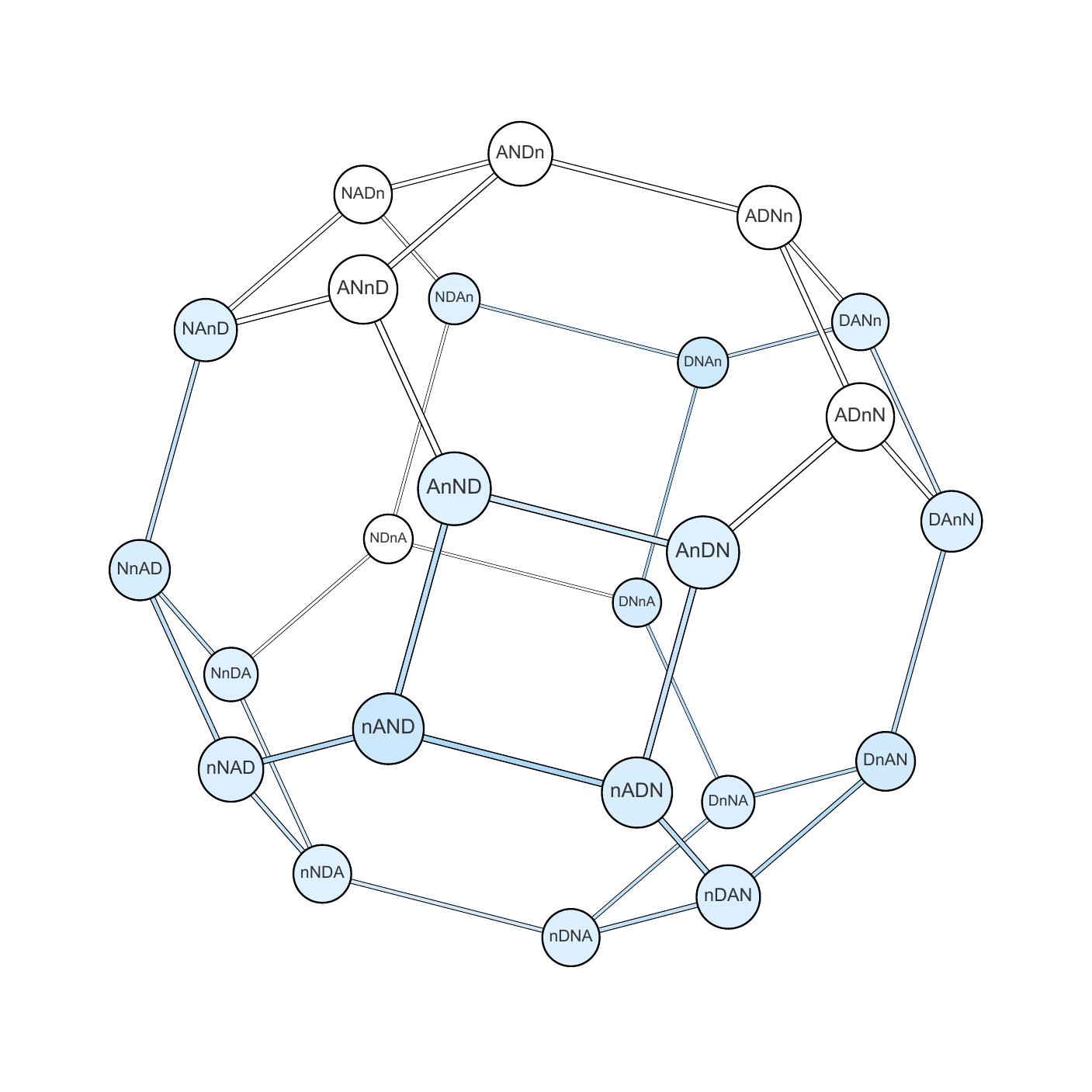}
\end{figure}

\subsection{Frequency of dominant word orders}

The frequency of the dominant order, measured in languages, of SOV and VOX according to WALS are borrowed from \citet{wals-81} and \citet{wals-84}, respectively. Word order frequencies, measured in languages and in families, are borrowed from  \citet{Hammarstroem2016a}. The frequency of the preferred order of nAND, measured in three units: languages, genera and adjusted number of languages, are borrowed from  \citet{Dryer2018a}. The latter unit is a correction on the number of languages to control for genealogy and geography (see \citet{Dryer2018a} for further details on these counts).\footnote{Adjusted number of languages is called adjusted frequency by \citet{Dryer2018a} but we borrow the name from \citet{Martin2019a} because it makes explicit the unit of measurement.}

\subsection{Frequency of word order in corpora}

We obtain the corpus frequency of the orders of SOV, OVI and SOVI from research on word order by Leela and collaborators \citep{Leela2016a,Gavarro2015a,Gavarro2016a}.
The frequency of the orders of SOV, OVI and SOVI in Hindi-Urdu spontaneous speech was obtained from Table 1 of \citep{Leela2016a}. 
In that table, one finds VSO (with frequency 50) and VSOc (with frequency 17). The subindex c indicates that the object O is a subordinate phrase \footnote{Anna Gavarró, personal communication (19 February 2024); Maya Leela, personal communication (14 April 2024).} that is known as Complementizer Phrase (CP) in the terminology of generative grammar \citep{Gavarro2015a}.
We offer two analyses: one where they are merged as VSO (with frequency 67) and another where VSOc is discarded. The one where they are merged is marked with the label CP. 
In Table 1 of \citep{Leela2016a}, one also finds ISVOc (with frequency 9)\footnote{Indeed, it appears with the notation DI S V DO where DI is I in our notation and DO is O in our notation.}. We offer two analyses: one where it was replaced by ISVO (with frequency 9) and another where it was removed. 
The corpus frequency of the orders of SOV in Hindi-Urdu was obtained also from child directed speech (CDS), i.e. adult speech directed to children (Table 2 of \citep{Leela2016a}). This dataset is marked as CDS.
The corpus frequency of the order of SOV in Malayalam was obtained from Table 3 of \citep{Leela2016a}.
In Table 1 of \citep{Leela2016a}, the order IOV appears twice, one with frequency 42 and the other with frequency 12. Those numbers were summed to produce 52 as the frequency of OIV. \footnote{According to Maya Leela (personal communication, 16 April 2024), there was a typo in the original table and she told us to sum both numbers to fix the problem.}
Finally, we also include the absolute frequencies of the orders of the SOV structure from Lev-Ari's word order experiments \citep{Lev-Ari2023a}. 

\section{Methods}

\label{sec:methods}
 
For a certain structure in a dataset, we compute $F$ (Equation \ref{eq:total_frequency}), $H$, the entropy of the word orders, and $\averageswapdistance$, the average swap distance between the possible orders of that structure.
$H$ is computed using the plug-in estimator (Appendix \ref{app:subsec_entropy}).
To reduce numerical error, we compute $H$ via the expanded equivalent expression (Equation \ref{eq:entropy_plug-in_estimator_expanded})
and $\averageswapdistance$ via
\begin{eqnarray*}
\averageswapdistance = \frac{1}{F^2} \sum_{i=1}^{N} f_i \sum_{j=1}^{N} f_j d_{ij},
\end{eqnarray*}
where $d_{ij}$ is the swap distance between orders $i$ and $j$, i.e. the distance in edges between $i$ and $j$ in the permutohedron.

\subsection{Null hypotheses}

We determine if $H$ is significantly small as expected by the entropy minimization principle or if $\averageswapdistance$ is significantly small as expected by the swap distance minimization principle using the following null hypotheses, which are sorted in order of increasing (non-decreasing) strength:
\begin{enumerate}
    \item 
    A die rolling experiment ($dr$). The experiment consists of rolling a fair die $F$ times. The die has $n!$ sides, each corresponding to one of the orders of the structure. The frequency of each word order is the frequency of the corresponding side over the $F$ rolls.
    \item 
    A random walk. Orders are chosen by performing an unbiased random walk on the permutohedron graph. Each random walk starts on some vertex of that graph and lasts for an arbitrary number of steps.       
    \item 
    A Polya urn experiment ($Pu$) with balls of $n!$ distinct colors. Each color corresponds to a word order. The experiment consists of initializing the urn with $n!$ balls of distinct colors and then choosing a ball with replacement and adding a copy of it to the urn until the urn contains $F$ balls.    
    \item 
    A random permutation ($rp$), that consists of shuffling the $n!$ word order frequencies. That null hypothesis preserves the empirical distribution of word order frequencies and thus preserves $H$.   
\end{enumerate} 
Notice that all the null hypotheses except the random walk neglect the structure of the permutohedron in the sense that word order frequencies are produced ignoring the permutohedron structure. However, as the permutohedron is a regular graph (all vertices have the same degree), it is easy to see that the probability of visiting a certain vertex is the same for all vertices regardless of the initial vertex or the length of the walk \citep{Lovasz1993a}. Therefore, the random walk null hypothesis is equivalent to the die rolling null hypothesis.

Null hypotheses can be classified into preserving (those that preserve the true actual distribution of word orders in the data) and non-preserving (those that do not preserve that distribution). The die rolling experiment, the random walk and the Polya urn experiment are non-preserving. The random permutation is preserving. 
The Polya urn was chosen for it simplicity and its ability to reproduce skewed distributions, as it is rather easy to reject the die rolling null hypothesis given the skewed nature of word order distributions \citep{Cysouw2010a,Ferrer2024b}.
However, there is an inherent arbitrariness in the choice of the non-preserving null models. They are a tiny sample of the ensemble of null models that would produce a higher or a lower test statistic ($H$ or $\averageswapdistance$) than the empirical data. Crucially, we are considering a random permutation, a preserving null hypothesis that cuts arbitrariness for tests on $\averageswapdistance$ from the ensemble of non-preserving distribution by addressing the following question: if we use the best model for the data, namely, the data itself, can we still find evidence of swap distance minimization? Some arbitrariness for tests on $H$ remains and is critical for the interpretation of the results: the distinct non-preserving models tell us about the distinct levels of skewness in the data (low skewness for die rolling, higher skewness for the Polya urn) that in turn inform about the intensity of the principle under investigation. Covering more levels of skewness or modeling the precise distribution of frequencies is beyond the scope of the present article.

In Table \ref{tab:statistical_summary_average_swap_distance}, the expected value of $\averageswapdistance$ for the die rolling null hypothesis, $\averageswapdistance_{dr}$, is calculated with Equation \ref{eq:expected_average_swap_distance_die_rolling_main_text}; the expected value of $\averageswapdistance$ for the random permutation null hypothesis, $\averageswapdistance_{rp}$, is calculated with Equation \ref{eq:expected_average_swap_distance_random_permutation_main_text}. 
The expected value of $\averageswapdistance$ for the Polya urn null hypothesis, $\averageswapdistance_{Pu}$, is estimated by means of a Monte Carlo procedure that consists of running $T$ Polya urn experiments. 
$\averageswapdistance_{max}$ is calculated by means of Equation \ref{eq:average_swap_distance_max_main_text}.

The frequencies of the orders of nAND structures are not integer numbers when measured in adjusted number of languages \citep{Dryer2018a}. In that case, $F$ is rounded to the nearest integer for the die rolling and the Polya urn experiments. 

\subsection{Hypothesis testing}

Given a null hypothesis and a score ($H$ or $\averageswapdistance$), we run $T$ experiments and estimate $\mathbb{P}$, the left $p$-value, as the proportion of experiments such that $x' \leq x $, where $x'$ is the value of $\averageswapdistance$ in a experiment. If the $p$-value estimated by the Monte Carlo procedure is 0, the estimate is replaced by $1/T$, a likely upper bound of the actual $p$-value. For the permutation null hypothesis and $n=3$, $\mathbb{P}$ is computed exactly by generating all permutations of the vector $p$.  

To control for multiple comparisons, we apply a Holm correction to $p$-values obtained with the same null hypothesis. We use a $<$ sign to indicate $p$-values that are a likely upper bound before or after applying the Holm correction.   

As the manifestation of the principles (entropy minimization or swap distance minimization) may be weak, we also consider a test that, given a score $x$ ($H$ or $\averageswapdistance$) and some null hypothesis, compares $x$, the value of the score against $x'$, the value of the score under the null hypothesis across all cases by means of a Wilcoxon signed-rank test \citep{Conover1999a}. This test is often used to check if a certain medical treatment has a significant effect on patients. The test is paired in the sense that one has two paired (or matched) samples; a measurement on a patient before the treatment in one sample is paired with a measurement on the same patient after the treatment in the other sample. Here we are comparing the original value of the score against its paired value after randomization by some null hypothesis that preserves just some information about the original configuration (e.g., all the null hypotheses just preserve $F$, excluding the random permutation hypothesis, that preserves the multiset of word frequencies). If the original values came from the null hypothesis, no significant difference between $x$ and its randomized counterpart $x'$ should be found. Here we used a left-sided Wilcoxon signed-rank test as we assume that, if the principle is acting, a tendency towards $x < x'$ is expected. Put differently, the test checks if the randomization of the null hypothesis has some effect towards larger values. If it does, that implies that the corresponding optimization principle (entropy minimization or swap distance minimization) has some effect.  

In this article, we assume a significance level of 0.05 just as an orientation for discussion. 

\subsection{The statistical power of the random permutation test}

The statistical power of the permutation test, i.e. its {\em a priori} capacity to reject the null hypothesis, is limited. 
$\mathbb{P}$, the left $p$-value of the permutation test, is predetermined to be ``large'' in certain conditions. When $n = 3$, $\mathbb{P} \geq \frac{1}{60}$ and the lower bound of $\mathbb{P}$ can be even greater depending on $m$, the number of non-zero probability orders, or the location of the non-zero orders in the permutohedron. An accurate lower bound of $\mathbb{P}$ tends to increase as $m$ increases. A detailed mathematical analysis is presented in Appendix \ref{subsec:p_value_permutation_test}. The take home message is that the permutation test on 
$\averageswapdistance$ may fail to reject the null hypothesis because of lack of statistical power, not because the hypothesis of swap distance minimization is incorrect.

\section{Results}

\label{sec:results}

$m$, the number of orders that have non-zero frequency, is maximum ($N=m$) in most cases when $n=3$ (Table \ref{tab:statistical_summary_entropy}). When $n=4$, there is a big gap between $m$ and $N$ specially for the SOIV structure (Table \ref{tab:statistical_summary_entropy}), suggesting a certain difficulty of languages to cover the whole space of possible permutations. The next subsections shed light on the possible nature of such difficulty. 

\subsection{Entropy}

Table \ref{tab:statistical_summary_entropy} shows the actual entropy ($H$), its expected value in a die-rolling experiment ($H_{dr}$) and in a Polya urn experiment ($H_{Pu}$) for all syntactic structures. 
In all cases, $H < H_{Pu} < H_{dr}$. 
The fact $H_{dr}$ is always the largest is not very surprising as one expects that, $H_{dr} \approx H_{max}$, where $H_{max}=\log N$ is the theoretical maximum $H$.
Then, it is not surprising that $H$ is always significantly small with respect to die rolling. Under the Polya urn null hypothesis, $H$ is always significantly small when $n=4$ and never when $n=3$ (Table \ref{tab:statistical_summary_entropy}). 
It is not very surprising to find entropy minimization effects in all cases when $n=4$ as the space of possible permutations is wider. 
When $n=3$, $H < H_{Pu}$ in all 10 cases. How likely is it that this has happened by chance?
A one-sided Wilcoxon signed-rank test supports a tendency for 
$H < H_{Pu}$
as expected by entropy minimization ($V = 0$, $p$-value $= 9.8 \cdot 10^{-4}$).

\begin{landscape}

\begin{table}
\caption{\label{tab:statistical_summary_entropy} Summary of the statistical information by database, kind, the sequence length $n$, the structure (struct.) and unit of measurement of frequency: $F$, the total frequency, $m$, the number of non-zero probability orders, $\dominanceindex$, the dominance index, $H$, the entropy, 
$H_{Pu}$, the expected value of $H$ in a Polya urn experiment,
$H_{dr}$, the expected value of $H$ in a die rolling experiment, $H_{max} = \log(n!)$, the maximum value of $H$, and the $p$-values ($\mathbb{P}$) of left-sided tests on $H$: Polya urn ($Pu$) and die rolling ($dr$). Each column of $p$-values has been adjusted with a Holm correction; the original $p$-value is shown below the adjusted $p$-value. The unit ``adj. langs.'' stands for adjusted number of languages.
}
\footnotesize
\centering
\begin{tabular}{@{} llllllllllllll @{}}
\hline
Database & Kind & $n$ & Struct. & Unit & $F$ & $m$ & $\dominanceindex$ & $H$ & 
$H_{Pu}$ & $H_{dr}$ & $H_{max}$ & 
$\mathbb{P}_{Pu}$ & $\mathbb{P}_{dr}$ \\
\hline
WALS & dominant order & 3 & SOV & langs. & 1187 & 6 & 0.6 & 1.06 & 1.45 & 1.79 & 1.79 & $0.21$ & $<1.5\cdot 10^{-6}$\\
 & & & & & & & & & & & & $0.03$ & $<1\cdot 10^{-7}$ \\
WALS & dominant order & 3 & VOX & langs. & 333 & 5 & 0.56 & 1.09 & 1.46 & 1.78 & 1.79 & $0.21$ & $<1.5\cdot 10^{-6}$\\
 & & & & & & & & & & & & $0.03$ & $<1\cdot 10^{-7}$ \\
Hammarstr\"{o}m & dominant order & 3 & SOV & langs. & 5128 & 6 & 0.62 & 1.13 & 1.45 & 1.79 & 1.79 & $0.21$ & $<1.5\cdot 10^{-6}$\\
 & & & & & & & & & & & & $0.05$ & $<1\cdot 10^{-7}$ \\
Hammarstr\"{o}m & dominant order & 3 & SOV & families & 340 & 6 & 0.47 & 0.94 & 1.46 & 1.78 & 1.79 & $0.1$ & $<1.5\cdot 10^{-6}$\\
 & & & & & & & & & & & & $9.6\cdot 10^{-3}$ & $<1\cdot 10^{-7}$ \\
Lev-Ari & experiments & 3 & SOV & frequency & 13985 & 6 & 0.58 & 1.13 & 1.45 & 1.79 & 1.79 & $0.21$ & $<1.5\cdot 10^{-6}$\\
 & & & & & & & & & & & & $0.05$ & $<1\cdot 10^{-7}$ \\
Leela & Hindi-Urdu CP & 3 & SOV & frequency & 3206 & 6 & 0.51 & 1.06 & 1.45 & 1.79 & 1.79 & $0.21$ & $<1.5\cdot 10^{-6}$\\
 & & & & & & & & & & & & $0.03$ & $<1\cdot 10^{-7}$ \\
Leela & Hindi-Urdu & 3 & SOV & frequency & 3139 & 5 & 0.48 & 0.98 & 1.45 & 1.79 & 1.79 & $0.14$ & $<1.5\cdot 10^{-6}$\\
 & & & & & & & & & & & & $0.02$ & $<1\cdot 10^{-7}$ \\
Leela & Hindi-Urdu CDS & 3 & SOV & frequency & 398 & 6 & 0.64 & 1.25 & 1.46 & 1.79 & 1.79 & $0.24$ & $<1.5\cdot 10^{-6}$\\
 & & & & & & & & & & & & $0.12$ & $<1\cdot 10^{-7}$ \\
Leela & Hindi-Urdu & 3 & OVI & frequency & 138 & 4 & 0.61 & 1.08 & 1.47 & 1.77 & 1.79 & $0.21$ & $<1.5\cdot 10^{-6}$\\
 & & & & & & & & & & & & $0.03$ & $<1\cdot 10^{-7}$ \\
Leela & Malayalam & 3 & SOV & frequency & 3763 & 4 & 0.74 & 1.37 & 1.45 & 1.79 & 1.79 & $0.28$ & $<1.5\cdot 10^{-6}$\\
 & & & & & & & & & & & & $0.28$ & $<1\cdot 10^{-7}$ \\
Dryer & dominant order & 4 & nAND & langs. & 576 & 18 & 0.83 & 2.13 & 2.8 & 3.16 & 3.18 & $1.3\cdot 10^{-4}$ & $<1.5\cdot 10^{-6}$\\
 & & & & & & & & & & & & $1\cdot 10^{-5}$ & $<1\cdot 10^{-7}$ \\
Dryer & dominant order & 4 & nAND & genera & 322 & 18 & 0.86 & 2.28 & 2.81 & 3.14 & 3.18 & $8.4\cdot 10^{-4}$ & $<1.5\cdot 10^{-6}$\\
 & & & & & & & & & & & & $7\cdot 10^{-5}$ & $<1\cdot 10^{-7}$ \\
Dryer & dominant order & 4 & nAND & adj. langs. & 212 & 18 & 0.88 & 2.42 & 2.83 & 3.12 & 3.18 & $6.2\cdot 10^{-3}$ & $<1.5\cdot 10^{-6}$\\
 & & & & & & & & & & & & $5.6\cdot 10^{-4}$ & $<1\cdot 10^{-7}$ \\
Leela & Hindi-Urdu CP & 4 & SOVI & frequency & 320 & 5 & 0.74 & 1.42 & 2.81 & 3.14 & 3.18 & $<1.5\cdot 10^{-6}$ & $<1.5\cdot 10^{-6}$\\
 & & & & & & & & & & & & $<1\cdot 10^{-7}$ & $<1\cdot 10^{-7}$ \\
Leela & Hindi-Urdu & 4 & SOVI & frequency & 311 & 4 & 0.72 & 1.33 & 2.81 & 3.14 & 3.18 & $<1.5\cdot 10^{-6}$ & $<1.5\cdot 10^{-6}$\\
 & & & & & & & & & & & & $<1\cdot 10^{-7}$ & $<1\cdot 10^{-7}$ \\
\\
\hline
\end{tabular}
\end{table}

\end{landscape}

\subsection{Average swap distance}

Table \ref{tab:statistical_summary_average_swap_distance} shows the actual value of $\averageswapdistance$ and its expected value in a die-rolling experiment ($\averageswapdistance_{dr}$), a Polya urn experiment ($\averageswapdistance_{Pu}$) and a random permutation ($\averageswapdistance_{rp}$) for all syntactic structures. 
In all cases, $\averageswapdistance < \averageswapdistance_{Pu} < \averageswapdistance_{dr}$. The finding that $\averageswapdistance < \averageswapdistance_{dr}$ is not very surprising because the expected values for a die rolling experiment are close to $\averageswapdistance_{max}$, the theoretical maximum $\averageswapdistance$ ($\averageswapdistance_{max} = 1.5$ when $n=3$ and $\averageswapdistance_{max}=3$ when $n=4$).

When it comes to significance, we find that $\averageswapdistance$ is always significantly small with respect to die rolling, never significant with respect to a random permutation and only significantly small with respect to a Polya urn when the structure is SOVI with and without CP in Hindi-Urdu and borderline significant in the SOV structure without CP in Hindi-Urdu (Table \ref{tab:statistical_summary_average_swap_distance}). 
However, notice that, for the SOV structure in Malayalam, the raw $p$-value of the permutation test (the $p$-value before Holm's correction), is $1/30 \approx 0.03$, which matches the minimum $p$-value for $n=3$ and $m=4$ 
(Appendix \ref{subsec:p_value_permutation_test}; Table \ref{tab:same_average_swap_distance}).

As for the Polya urn null hypothesis, it is not very surprising to find that swap distance minimization effects more easily when $n=4$ (SOVI structure) as the space of possible permutations is bigger (Figure \ref{fig:permutohedra_of_linguistic_structures}). 
When $n=3$, the statistical tests never find swap distance minimization effects but $\averageswapdistance < \averageswapdistance_{Pu}$ in 10 out of 10 cases. How likely is it that this has happened by chance?
A one-sided Wilcoxon signed-rank test supports a tendency for $\averageswapdistance < \averageswapdistance_{Pu}$ as expected by swap distance minimization ($V = 0$, $p$-value $= 9.8 \cdot 10^{-4}$).

Regarding the random permutation null hypothesis, $\averageswapdistance$ was never significantly low after Holm's correction but before it was in a few cases: the SOV structure including CP in Hindi-Urdu, the SOV structure in Malayalam and SOVI structures with and without CP in Hindi-Urdu (Figure \ref{fig:permutohedra_of_linguistic_structures}).
The finding of swap distance minimization for the SOV structure in Malayalam confirms previous findings on that corpus using a correlation statistic that assumes that SOV is the source order \citep{Ferrer2023a} while $\averageswapdistance$ does not make any assumption about the existence of a primary source order or the number of sources. 
Notice also that $\averageswapdistance < \averageswapdistance_{rp}$ in 11 out 15 cases (the failures were the VOX structure in WALS and the nAND structure independently of the unit of measurement of frequency). A one-sided Wilcoxon signed-rank test supports a tendency for $\averageswapdistance < \averageswapdistance_{rp}$ as expected by swap distance minimization ($V = 17$, $p$-value $= 6.2 \cdot 10^{-3}$).

Table \ref{tab:Wilcoxon_signed_rank_test_summary} shows the outcome of the Wilcoxon signed-rank test over all conditions, including conditions that we have not selected above for the sake of completeness and to shed some light on the robustness of the tests beyond the critical cases discussed in the main text.

\begin{landscape}

\begin{table}
\caption{\label{tab:statistical_summary_average_swap_distance} Summary of the statistical information by database, kind, 
the sequence length $n$, structure (struct.) and unit of measurement of frequency: $F$, the total frequency, $m$, the number of non-zero probability orders, $\averageswapdistance$, the average swap distance, $\averageswapdistance_{rp}$, the expected value of $\averageswapdistance$ in a random permutation,
$\averageswapdistance_{Pu}$, the expected value of $\averageswapdistance$ in a Polya urn experiment,
$\averageswapdistance_{dr}$, the expected value of $\averageswapdistance$ in a die rolling experiment, $\averageswapdistance_{max}$, the maximum value of $\averageswapdistance$ in a permutohedron of order $n$, and the left $p$-values ($\mathbb{P}$) of tests on $\averageswapdistance$: random permutation ($rp$), Polya urn ($Pu$) and die rolling ($dr$). Each column of $p$-values has been adjusted with a Holm correction; the original $p$-value is shown below the adjusted $p$-value. The unit ``adj. langs.'' stands for adjusted number of languages.
}
\footnotesize
\centering
\begin{tabular}{@{} lllllllllllllll @{}}
\hline
Database & Kind & $n$ & Struct. & Unit & $F$ & $m$ & $\averageswapdistance$ & 
$\averageswapdistance_{rp}$ & $\averageswapdistance_{Pu}$ & $\averageswapdistance_{dr}$ & 
$\averageswapdistance_{max}$ & $\mathbb{P}_{rp}$ & $\mathbb{P}_{Pu}$ & $\mathbb{P}_{dr}$ \\
\hline
WALS & dominant order & 3 & SOV & langs. & 1187 & 6 & 0.76 & 1.08 & 1.29 & 1.5 & 1.5 & $1$ & $0.12$ & $<1.5\cdot 10^{-6}$\\
 & & & & & & & & & & & & $0.18$ & $0.01$ & $<1\cdot 10^{-7}$ \\
WALS & dominant order & 3 & VOX & langs. & 333 & 5 & 1.08 & 1 & 1.29 & 1.5 & 1.5 & $1$ & $0.59$ & $<1.5\cdot 10^{-6}$\\
 & & & & & & & & & & & & $0.7$ & $0.12$ & $<1\cdot 10^{-7}$ \\
Hammarstr\"{o}m & dominant order & 3 & SOV & langs. & 5128 & 6 & 0.82 & 1.12 & 1.29 & 1.5 & 1.5 & $1$ & $0.16$ & $<1.5\cdot 10^{-6}$\\
 & & & & & & & & & & & & $0.18$ & $0.02$ & $<1\cdot 10^{-7}$ \\
Hammarstr\"{o}m & dominant order & 3 & SOV & families & 340 & 6 & 0.74 & 0.85 & 1.29 & 1.5 & 1.5 & $1$ & $0.1$ & $<1.5\cdot 10^{-6}$\\
 & & & & & & & & & & & & $0.27$ & $9.2\cdot 10^{-3}$ & $<1\cdot 10^{-7}$ \\
Lev-Ari & experiments & 3 & SOV & frequency & 13985 & 6 & 0.83 & 1.04 & 1.29 & 1.5 & 1.5 & $1$ & $0.16$ & $<1.5\cdot 10^{-6}$\\
 & & & & & & & & & & & & $0.2$ & $0.02$ & $<1\cdot 10^{-7}$ \\
Leela & Hindi-Urdu CP & 3 & SOV & frequency & 3206 & 6 & 0.72 & 0.91 & 1.29 & 1.5 & 1.5 & $0.43$ & $0.1$ & $<1.5\cdot 10^{-6}$\\
 & & & & & & & & & & & & $0.03$ & $8.6\cdot 10^{-3}$ & $<1\cdot 10^{-7}$ \\
Leela & Hindi-Urdu & 3 & SOV & frequency & 3139 & 5 & 0.67 & 0.87 & 1.29 & 1.5 & 1.5 & $0.73$ & $0.07$ & $<1.5\cdot 10^{-6}$\\
 & & & & & & & & & & & & $0.07$ & $5.3\cdot 10^{-3}$ & $<1\cdot 10^{-7}$ \\
Leela & Hindi-Urdu CDS & 3 & SOV & frequency & 398 & 6 & 0.99 & 1.15 & 1.29 & 1.5 & 1.5 & $1$ & $0.39$ & $<1.5\cdot 10^{-6}$\\
 & & & & & & & & & & & & $0.17$ & $0.06$ & $<1\cdot 10^{-7}$ \\
Leela & Hindi-Urdu & 3 & OVI & frequency & 138 & 4 & 0.83 & 1.1 & 1.29 & 1.49 & 1.5 & $1$ & $0.15$ & $<1.5\cdot 10^{-6}$\\
 & & & & & & & & & & & & $0.2$ & $0.02$ & $<1\cdot 10^{-7}$ \\
Leela & Malayalam & 3 & SOV & frequency & 3763 & 4 & 1.16 & 1.34 & 1.29 & 1.5 & 1.5 & $0.43$ & $0.59$ & $<1.5\cdot 10^{-6}$\\
 & & & & & & & & & & & & $0.03$ & $0.2$ & $<1\cdot 10^{-7}$ \\
Dryer & dominant order & 4 & nAND & langs. & 576 & 18 & 2.84 & 2.59 & 2.89 & 2.99 & 3 & $1$ & $0.59$ & $<1.5\cdot 10^{-6}$\\
 & & & & & & & & & & & & $0.9$ & $0.24$ & $<1\cdot 10^{-7}$ \\
Dryer & dominant order & 4 & nAND & genera & 322 & 18 & 2.83 & 2.69 & 2.89 & 2.99 & 3 & $1$ & $0.59$ & $<1.5\cdot 10^{-6}$\\
 & & & & & & & & & & & & $0.78$ & $0.19$ & $<1\cdot 10^{-7}$ \\
Dryer & dominant order & 4 & nAND & adj. langs. & 212 & 18 & 2.81 & 2.77 & 2.89 & 2.99 & 3 & $1$ & $0.59$ & $<1.5\cdot 10^{-6}$\\
 & & & & & & & & & & & & $0.55$ & $0.12$ & $<1\cdot 10^{-7}$ \\
Leela & Hindi-Urdu CP & 4 & SOVI & frequency & 320 & 5 & 1.43 & 2.31 & 2.89 & 2.99 & 3 & $0.28$ & $<1.5\cdot 10^{-6}$ & $<1.5\cdot 10^{-6}$\\
 & & & & & & & & & & & & $0.02$ & $<1\cdot 10^{-7}$ & $<1\cdot 10^{-7}$ \\
Leela & Hindi-Urdu & 4 & SOVI & frequency & 311 & 4 & 1.28 & 2.27 & 2.89 & 2.99 & 3 & $0.22$ & $<1.5\cdot 10^{-6}$ & $<1.5\cdot 10^{-6}$\\
 & & & & & & & & & & & & $0.01$ & $<1\cdot 10^{-7}$ & $<1\cdot 10^{-7}$ \\

\hline
\end{tabular}
\end{table}

\end{landscape}

\begin{table}
\caption{\label{tab:Wilcoxon_signed_rank_test_summary}  Summary of the Wilcoxon signed-rank test for each inequality over the whole dataset (``all'') and for distinct subgroups determined by structure size ($n$) or the nature of the word order frequencies (``dominant'' for word order preferences across languages and ``corpus'' for word order frequencies for the corpus of a specific language). $V$ is the value of the statistic of the Wilcoxon signed-rank test and $\mathbb{p}$ is the corresponding $p$-value. Given an inequality, we do not control for multiple comparisons (with a Holm correction) as the aim of this table is not to fish for significance across conditions; this table is essentially provided for the sake of completeness. 
}
\footnotesize
\centering
\begin{tabular}{@{} llllllllllllllll @{}}
\hline
& \multicolumn{2}{c}{all} & \multicolumn{2}{c}{$n=3$} & \multicolumn{2}{c}{$n=4$} & \multicolumn{2}{c}{dominant} & \multicolumn{2}{c}{corpus} \\ 
\cmidrule(lr){2-3} \cmidrule(lr){4-5} \cmidrule(lr){6-7} \cmidrule(lr){8-9} \cmidrule(lr){10-11}
& $V$ & $\mathbb{P}$ & $V$ & $\mathbb{P}$ & $V$ & $\mathbb{P}$ & $V$ & $\mathbb{P}$ & $V$ & $\mathbb{P}$ \\
\hline
$H < H_{Pu}$ & 5 & $3.1\cdot 10^{-4}$ & 0 & $9.8\cdot 10^{-4}$ & 5 & $0.31$ & 5 & $0.08$ & 0 & $3.9\cdot 10^{-3}$ \\
$\left<d\right> < \left<d\right>_{Pu}$ & 0 & $3.1\cdot 10^{-5}$ & 0 & $9.8\cdot 10^{-4}$ & 0 & $0.03$ & 0 & $7.8\cdot 10^{-3}$ & 0 & $3.9\cdot 10^{-3}$ \\
$\left<d\right> < \left<d\right>_{rp}$ & 17 & $6.2\cdot 10^{-3}$ & 1 & $2\cdot 10^{-3}$ & 6 & $0.41$ & 12 & $0.41$ & 0 & $3.9\cdot 10^{-3}$ \\

\hline
\end{tabular}
\end{table}

\section{Discussion}

\label{sec:discussion}

\subsection{Theory}

We have presented entropy minimization and swap distance minimization as separate principles but they are related via the scores that we have used for each. $H$, $\dominanceindex$, and $\averageswapdistance$ are diversity scores: they hit zero, their minimum value when diversity is minimum (only one word is used) and hit their maximum value when orders are equally likely (this is true for $\averageswapdistance$ at least for $n\leq \nmax$). However, $\averageswapdistance$ hits is maximum value in conditions where $H$ is not maximum (Property \ref{prop:average_swap_distance equally_likely_orders_and_more}).
$H$ and $\dominanceindex$ care only about the probabilities of each order while $\averageswapdistance$ takes also into account the structure of the permutothedron. The relationship between entropy minimization and swap distance minimization can be understood via $\dominanceindex$, that is another diversity score that only cares about word order probabilities as $H$ does. $\dominanceindex$ appears in various properties of $\averageswapdistance$: involved in an upper bound for $\averageswapdistance$ (Property \ref{prop:upper_bound_Simpson_index}) or in the expected value of $\averageswapdistance$ (Property \ref{prop:expected_average_swap_distance_random_permutation}).
Indeed, both $H$ and $\averageswapdistance$ are related via the Rényi entropy, that is defined as \citep{Renyi1961a}
\begin{eqnarray*}
R_\alpha = \frac{1}{1 - \alpha} \log\left( \sum_{i=1}^n p_i^\alpha \right).
\end{eqnarray*}
$H$ and $S$ (or $\dominanceindex$) are particular cases of Rényi entropy
because
\begin{eqnarray*}
\lim_{\alpha \rightarrow 1} R_{\alpha}= H\\
R_2 = - \log(S).
\end{eqnarray*}
Very similar arguments can be made with Tsallis entropy \citep{Tsallis1988a}. Therefore, entropy minimization in word order could be an implication of the more fundamental principle of swap distance minimization. Alternatively, the principle of entropy minimization and the principle of swap distance minimization may be implications of the minimization of some generalized entropy that is aware of the permutohedron structure. These possibilities should be the subject of future research. 

\subsection{Experiments}

Here we have investigated the manifestation of word order entropy minimization and swap distance minimization in word order applying a novel score for swap distance minimization.

\subsubsection{Entropy minimization}

We have found very strong support for entropy minimization as expected by generalizing the principle of word entropy minimization \citep{Ferrer2015b,Ferrer2013f} with respect to the die rolling null hypothesis and also, when $n=4$ with respect to the Polya urn null hypothesis. When $n=3$, there is still a significant signal of entropy minimization under the Polya urn null hypothesis that is unveiled by a global analysis over all structures with $n=3$. 

\subsubsection{Swap distance minimization}

With respect to the null hypothesis of a die rolling experiment, we have found super strong support for swap distance minimization but this is a rather trivial result because the word orders of a certain syntactic structure are not equally likely \citep{Cysouw2010a} and we have already provided support for entropy minimization in word order variation. Interestingly, the Polya urn is able to produce a more skewed distribution of word orders but its dynamics is not driven explicitly by any cost minimization principle. 
Thus, the Polya urn serves as baseline for the intensity of swap distance minimization effects.
According to the Polya urn null hypothesis, we have found a weak signal of the principle in triplets but a strong signal for the SOIV structure. 
In triplets, the actual $\averageswapdistance$ was significantly small with respect to a Polya urn experiment only in the SOV structure in Hindi-Urdu. Such a signal is weak but significant because of a tendency for $\averageswapdistance < \averageswapdistance_{Pu}$ when $n=3$.
In quadruplets, the actual $\averageswapdistance$ was significantly small compared to a Polya experiment for the SOVI structure with and without CP \citep{Dryer2018a}.

The fact that $entropy$ or $\averageswapdistance$ is significantly small with respect to die rolling or Polya urn does not imply that word orders are constrained to have low $H$ or $\averageswapdistance$. It could simply mean that actual word order frequencies are more skewed than expected by these stochastic processes. As researchers have tried to reduce the explanation of the frequency distribution of linguistic units in languages to simplistic stochastic processes since G. K. Zipf's times \citep{Miller1957,Rapoport1982,Suzuki2004a}, we invite skeptical researchers about the reality of swap distance  minimization to use any mechanism they wish for the origins of the frequency distribution. Here we have taken the best model for the $N$ word order probabilities, i.e. the $N$ relative frequencies obtained from the dataset, to build the random permutation null hypothesis and to demonstrate that swap distance minimization effects cannot be reduced to the distribution of word order frequencies. 

From a cognitive or linguistic standpoint, the random permutation null hypothesis is able to control for other word order principles or mechanisms. We do not need to know exactly which. In particular, the random permutation null hypothesis addresses the following question: if we accept that word order frequencies are determined by other word order principles (principles other than swap distance minimization) and accordingly preserve the frequency distribution, can we still see the action of swap distance minimization? The finding is that there is a weak but significant sign of the action of swap distance minimization across conditions. 
Therefore, we conclude that, even when the entropy of word orders is kept constant as in the random permutation null hypothesis, we still detect the effect of swap distance minimization. Therefore, neither the general principle of entropy minimization \citep{Ferrer2015b,Ferrer2013f} nor alternatives to swap distance minimization can fully explain word order preferences. 

It could still be argued that swap distance minimization is a rather weak principle compared to entropy minimization. Indeed, the $p$-values produced by the permutation test (Table \ref{tab:statistical_summary_average_swap_distance}) were very high for $n=3$. However, this may be caused by a lack of statistical power of the test for $n=3$. The minimum $p$-value is $1/60$ because of the large number of symmetries, and this lower bound increases as $m$ reduces from $m=5$ down to $1$ (Appendix \ref{subsec:p_value_permutation_test}). Thus, symmetries and the difficulty of languages to cover the whole permutation space may be shadowing the action of swap distance minimization in languages. The paradox is that, in case swap distance was minimized, i.e. $\averageswapdistance = 0$, the left $p$-value of the permutation test would be $\mathbb{P}=1$ independently of the value of $n$. Thus the permutation test cannot detect swap distance minimization if $m$ is small (Figure \ref{fig:p_value_lower_bound}, Property \ref{prop:statistical_power}), although that small number can be interpreted as a consequence of swap distance minimization. Even when $n = 4$, we failed to find evidence of swap distance minimization for the nAND structure (Table \ref{tab:statistical_summary_average_swap_distance}). The fact that 
$\averageswapdistance > \averageswapdistance_{rp}$ in nAND structures suggests that swap distance minimization is not acting in these structures. However, notice that the two most frequently preferred orders, nAND and its mirror, DNAn are located at maximum distance in the permutohedron (Figure \ref{fig:permutohedra_of_linguistic_structures}). We hypothesize that the principle is still acting but that $\averageswapdistance$ is not well-suited to capture its effects when variants are produced from distant sources. $\averageswapdistance$ is maximized when only two orders have non-zero probability and they are located at maximum distance (Property \ref{prop:average_swap_distance equally_likely_orders_and_more}). For the nAND structure, $\averageswapdistance > 2.8$ while, theoretically, $\averageswapdistance \leq 3$ (Equation \ref{eq:average_swap_distance_max_main_text}; Property \ref{prop:global_upper_bound}). That hypothesis should be the subject of future research. 

\subsection{Concluding remarks}

We have just investigated some syntactic structures and, within languages, we have investigated the effect of these principles just for a few languages. Our article was intended to set the stage for a future massive exploration of languages that does justice to the linguistic diversity of our planet.

\iftoggle{anonymous}{}{

\section*{Acknowledgments}

We are grateful to Shiri Lev-Ari for helpful discussions on swap distance minimization and assistance on her word frequency counts \citep{Lev-Ari2023a}. We are also grateful to Maya Leela and Anna Gavarró for advice on Leela's corpora \citep{Leela2016a}. 

This research was presented during an IFISC Seminar in Palma de Mallorca (Spain) on 8 March 2023. The third author thanks the organizers and participants of that event for valuable questions and discussions. This research is supported by the grant PID2024-155946NB-I00 funded by Ministerio de Ciencia, Innovación y Universidades (MICIU), Agencia Estatal de Investigación (AEI/10.13039/501100011033) and the European Social Fund Plus (ESF+).
This research is also supported by a recognition 2021SGR-Cat (01266 LQMC) from AGAUR (Generalitat de Catalunya).
}

\bibliographystyle{apacite}

\bibliography{all}

\appendix

\section{Diversity indices}

\label{app:diversity_indices}

\subsection{Dominance index}

$\bar{S}$ appears in various expressions about $\averageswapdistance$ (Section \ref{sec:introduction}, Appendix \ref{app:mathematical_properties_of_average_swap_distance} and Appendix \ref{app:null_hypotheses}) and thus it is worth asking about its range of variation. 
As $0 \leq p_i \leq 1$, it is easy to see that 
\begin{eqnarray*}0 \leq \bar{S} \leq 1.\end{eqnarray*}
However, 1 is not a tight upper bound for $\bar{S}$ as we will show. In addition, as the value of $n$ increases, languages will have difficulties to fill space of possible permutations. Therefore, we aim for a simple upper bound of $\bar{S}$  that involves just $m$, namely the number of $p_i$'s that are not zero ($m$ is the size of the support set; $m \leq N$). The next property indicates that $S$ is minimized when all orders in the support set are equally likely.
\begin{property}
\begin{eqnarray*}S \geq \frac{1}{m}\end{eqnarray*}
\end{property}

\begin{proof}
Let $q$ be a vector of the same length as $p$. 
By the Cauchy-Schwarz inequality, 
\begin{eqnarray*}\left(\sum_{i=1}^N p_i q_i \right)^2 \leq \sum_{i=1}^N p_i^2 \sum_{i=1}^N q_i^2.\end{eqnarray*}
Suppose that $q_i = 1$ if $p_i >0$ and $q_i = 0$ otherwise. Then, recalling 
that 
\begin{eqnarray*}
\sum_{i = 1}^{N} p_i = 1
\end{eqnarray*}
and noting that 
\begin{eqnarray*}
\left(\sum_{i=1}^N p_i q_i \right)^2 & = & \left(\sum_{i=1}^N p_i \right)^2 \\
   & = & 1 \\
\sum_{i=1}^N q_i^2 & = & \sum_{i=1}^N q_i \\ 
                   & = & m,  
\end{eqnarray*}
the inequality becomes
\begin{eqnarray*}\sum_{i=1}^N p_i^2 \geq \frac{1}{m}.\end{eqnarray*}
\end{proof}
\noindent Then $\bar{S}$ is bounded above by the proportion of non-zero probability orders, i.e.
\begin{eqnarray*}
\dominanceindex \leq 1 - \frac{1}{m}.
\end{eqnarray*}

\subsection{Entropy}

\label{app:subsec_entropy}

It is well-known that \citep{Cover2006a}
\begin{eqnarray*}
0 \leq H \leq H_{max} = \log m.
\end{eqnarray*}
Here we compute $H$ using the plug-in estimator, namely 
\begin{eqnarray}
\label{eq:entropy_plug-in_estimator}
H & = & -\sum_{i=1}^{N} \frac{f_i}{F}\log\frac{f_i}{F} \nonumber \\
\label{eq:entropy_plug-in_estimator_expanded}
  & = & \log F - \frac{1}{F}\sum_{i=1}^{N} f_i \log f_i.
\end{eqnarray}
In the previous expression, $f_i \log f_i = 0$ if $f_i = 0$ follows from $p_i \log p_i = 0$ if $p_i = 0$, which is a convention in information theory \citep{Cover2006a} that can be justified by   
$$\lim_{x \rightarrow 0} x \log x = 0$$ 
thanks to l'H\^{o}pital's rule.
We use the plug-in entropy estimator for three reasons: (a) simplicity, (b) our main goal is not to estimate the true value of entropy but to determine if entropy is significantly small and (c) the plug-in estimator is very strongly correlated with advanced entropy estimators \citep{Bentz2017a}.

\subsection{A swap distance score}

We define the mean swap distance conditioning on a source order $i$ as
\begin{eqnarray}
\left< d | i\right> = \sum_{j=1}^{N} d_{ij} p_j.
\label{eq:average_swap_distance_from_source_order}
\end{eqnarray}
One could investigate swap distance minimization with $\left< d | i\right>$ and choosing $i$ as the source order. The choice of the source could be determined by that ordering being canonical, dominant or underlying word order. However, using a single word order as source assumes that all word order variation is produced from it.
That is, other word orders, specially high frequency ones, could be sources from which word order variation is produced. 
Then we define the mean swap distance as the mean swap distance conditioning on all possible source orders, i.e. 
\begin{eqnarray}
\averageswapdistance & = & \sum_{i=1}^{N} p_i \left< d | i \right> \nonumber \\
                & = & \sum_{i=1}^{N} \sum_{j=1}^{N} d_{ij} p_i p_j. \label{eq:expanded_average_conditional_probability}
\end{eqnarray}
Thus $\averageswapdistance$ can be seen as a particular case of the expectation of $d_{ij}$ with respect to the joint probability of orders $i$ and $j$, i.e. 
\begin{eqnarray*}
\averageswapdistance = \sum_{i=1}^{N} \sum_{j=1}^{N} d_{ij} p_{ij}, 
\end{eqnarray*}
that is obtained assuming that orders $i$ and $j$ are statistically independent and then $p_{ij} = p_i p_j$. 
$\averageswapdistance$ is a weighted average distance from every order to every other order, as if each order could potentially radiate to all other word orders following trajectories in the permutohedron. With respect to using just $\left< d | i\right>$, $\averageswapdistance$ solves the problem of the choice of the most frequent word order as the canonical in case the canonical order is unclear or weak as it happens in the case of couples of primary alternating word orders \citep{wals-81}.

\section{Graph theory and combinatorics}

\label{app:theory} 

\subsection{Graph theory reminder}
\label{subsect:graph_theory_reminder}

We review some elementary concepts from graph theory. The degree of a vertex is its number of edges. A regular graph is a graph where all vertices have the same degree. A Hamiltonian graph is a graph that has a path that visits every vertex once.
We define $d_{ij}$ as the number of edges of the shortest path connecting vertices $i$ and $j$ in a graph. $d_{ij}$ is the topological distance, or shortly, the distance between vertices $i$ and $j$.
Then a vertex is at distance zero of itself ($d_{ii}=0$), two connected vertices are at distance one ($d_{ij}=1$ if $i$ and $j$ form an edge) and so on. The diameter of a graph of $N$ vertices is the maximum distance between vertices, i.e.  
\begin{equation*}
d_{max} = \max_{1 \leq i,j \leq N} d_{ij}.
\end{equation*}

We define two kinds of mean topological distances: the mean topological distance over all ordered pairs of vertices, that is defined as 
\begin{eqnarray}
\averageswapdistance_{op} = \frac{1}{N^2}\sum_{i=1}^{N} \sum_{j=1}^{N} d_{ij},
\label{eq:average_topopological_distance_all_ordered_pairs}
\end{eqnarray}
and the mean topological distance over all unordered pairs (excluding pairs formed by the same vertex), that is defined as
\begin{eqnarray}
\averageswapdistance_{up} = \frac{1}{{N \choose 2}}\sum_{i=1}^{N} \sum_{j=i+1}^{N} d_{ij}.
\label{eq:average_topopological_distance_all_unordered_pairs}
\end{eqnarray}
Since
\begin{eqnarray*}
\sum_{i=1}^{N} \sum_{j=i+1}^{N} d_{ij} 
   & = & \frac{1}{2}\left(\sum_{i=1}^{N} \sum_{j=1}^{N} d_{ij} - \sum_{i=1}^N d_{ii} \right) \\
   & = & \frac{1}{2} \sum_{i=1}^{N} \sum_{j=1}^{N} d_{ij},
\end{eqnarray*}
$\averageswapdistance_{up}$ can be expressed equivalently as
\begin{eqnarray}
\averageswapdistance_{up} & = & \frac{1}{2{N \choose 2}}\sum_{i=1}^{N} \sum_{j=1}^{N} d_{ij} \nonumber \\
                      & = & \frac{1}{N(N-1)}\sum_{i=1}^{N} \sum_{j=1}^{N} d_{ij}. 
                      \label{eq:average_topopological_distance_all_unordered_pairs_bis}
\end{eqnarray}

\subsection{The permutohedron}

\label{subsec:permutohedron}

The permutohedron of order $n$ is a graph where vertices are permutations of 
$$1,2,\dots,n$$ 
and an edge links two permutations if swapping two consecutive elements in one of the permutations leads to the other permutation. When $n=3$, one obtains the permutation ring that has been used in word order research \citep{Ferrer2008e,Ferrer2013e,Ferrer2016c,Ferrer2023a} (Figure \ref{fig:word_order_permutation_ring}). When $n=4$, one obtains the graph in Figure \ref{fig:permutohedron_3_and_4}.
The permutohedron has $N = n!$ vertices and is a regular graph where vertex degree is $n - 1$. Thus it has 
\begin{eqnarray*}\frac{n - 1}{2} n!\end{eqnarray*} 
edges. The permutohedron is also a Hamiltonian graph \footnote{The demonstration relies on algorithms that are able to generate all permutations (i.e. all vertices of the graph) by swapping a pair of adjacent elements to get to the next vertex in the path that visits every vertex once \citep{Sedgewick1977a}. Every swap is equivalent to following an edge in the permutohedron.}

\begin{figure}
\caption{\label{fig:permutohedron_3_and_4} Permutohedra of order $n$ with $n = 3$ (top) and $n=4$ (bottom). Every vertex is a distinct permutation of the integer numbers between $1$ and $n$. An edge joining two vertices indicates that one vertex yields the other vertex by swapping a pair of adjacent numbers. 
}
\centering
\includegraphics[width = 0.6\textwidth]{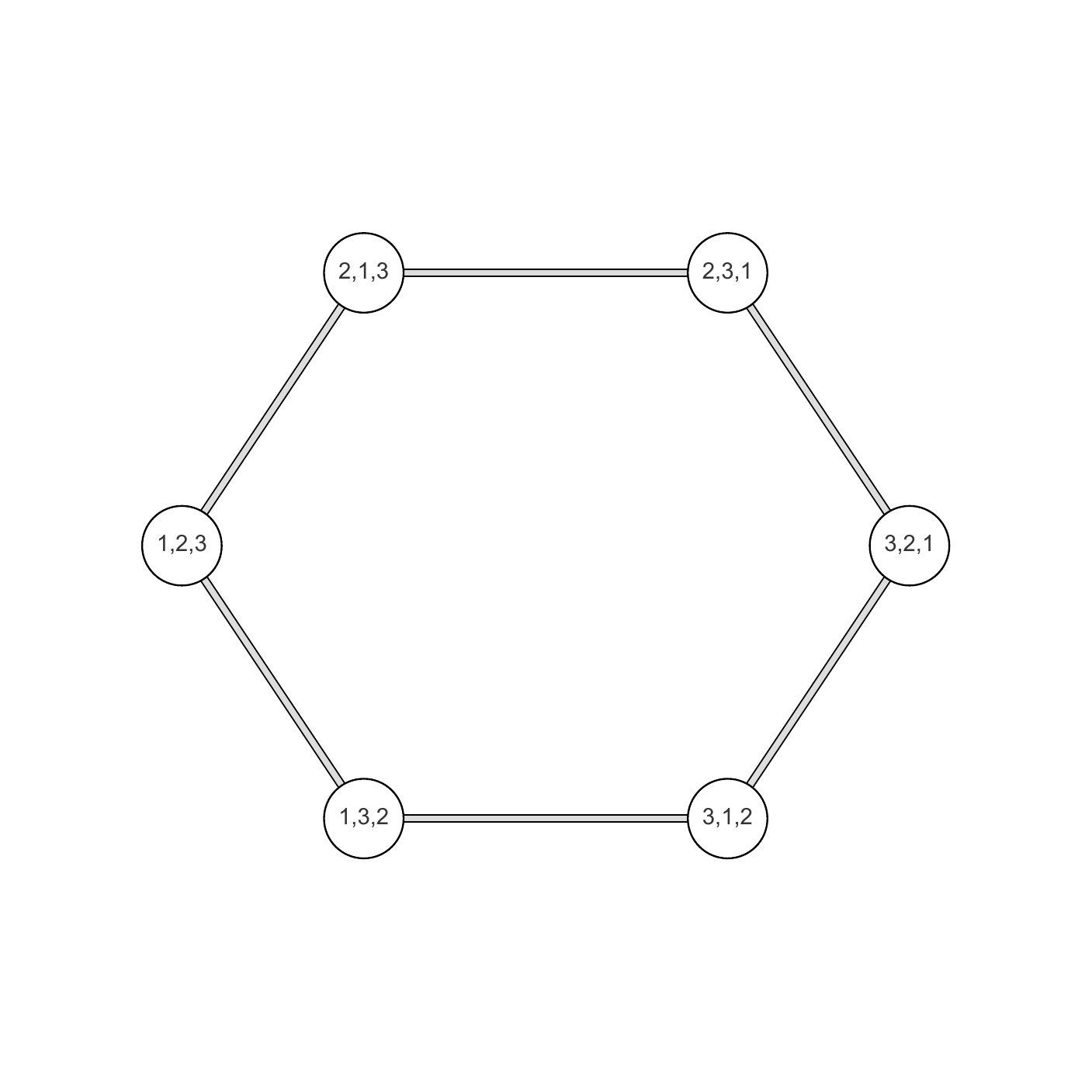}\\
\includegraphics[width = 0.6\textwidth]{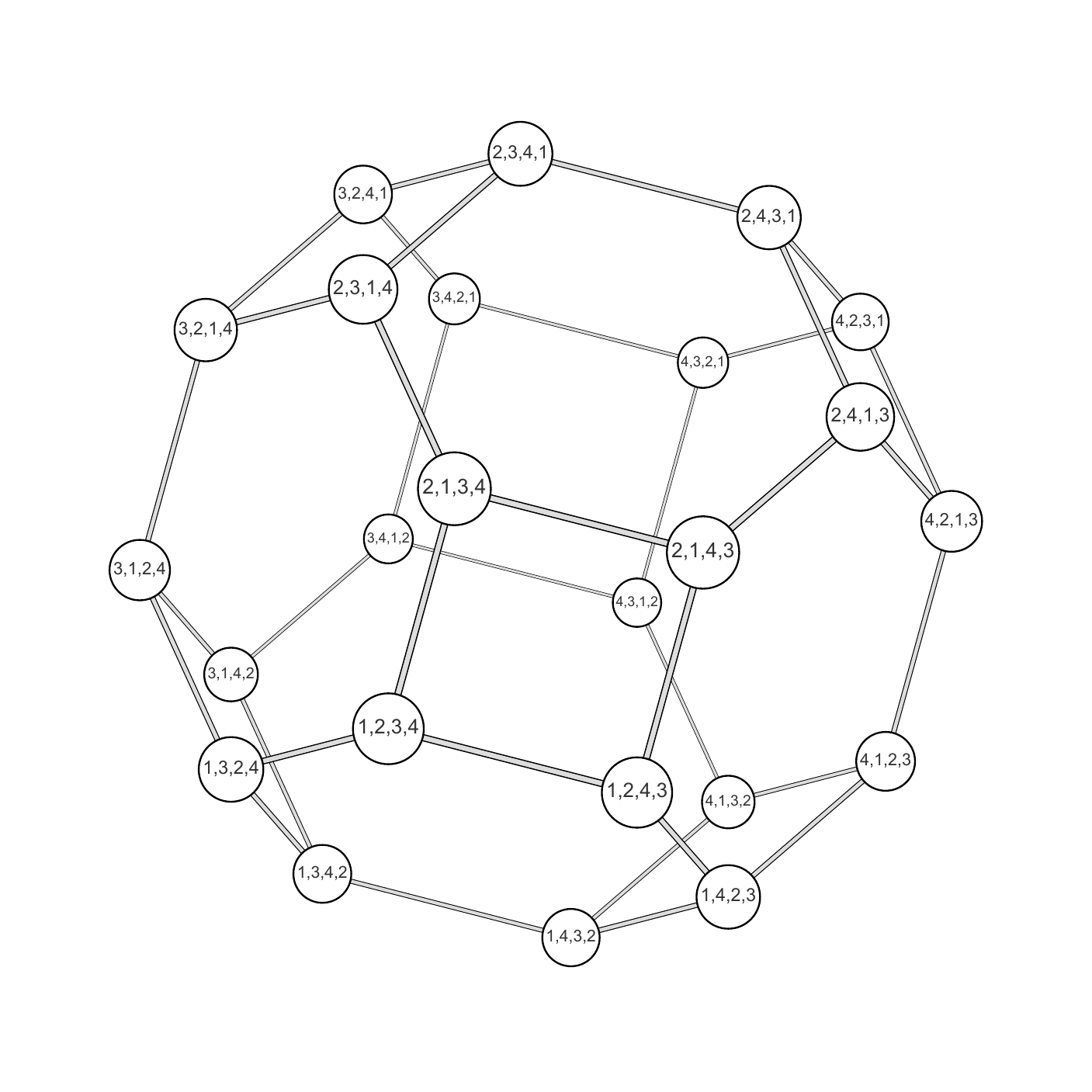}
\end{figure}

We define $d$, the swap distance between two permutations, as their distance in the permutohedron. By definition, $d \in [0,d_{max}]$, where $d_{max}$ is the diameter of the permutohedron.
It is well-known that \citep{Ceballos2015a}
\begin{eqnarray}
d_{max} =  {n \choose 2}.    
\label{eq:diameter}
\end{eqnarray}

We define $T(n, k)$ as the number of permutations of a string of length $n$ at swap distance $k$. 
By definition,
\begin{eqnarray}
\sum_{k\geq 0} T(n, k) = n!.
\label{eq:factorial_property}
\end{eqnarray}
$T(n, k)$ is also known as a Mahonian number \citep{MacMahon1913a}. 
Mahonian numbers can be defined in high level of detail as 
\citep{OEIS_A008302}\footnote{See the section {\em Formula} and the note ``From Andrew Woods, Sep 26 2012, corrected by Peter Kagey, Mar 18 2021''.},  
\begin{eqnarray*}
T(n, k) = \begin{cases}
1 & \text{if }n = 1 \text{ and } k = 0\\
0 & \text{if } n < 0, k < 0 \text{ or } k > {n \choose 2} \\
\sum_{j=0}^{n-1} T(n-1, k-j) & \text{otherwise} \\
T(n, k-1) + T(n-1, k) - T(n-1, k-n) & \text{otherwise.}
\end{cases}
\end{eqnarray*}
Notice that the previous equation has two equivalent recursive cases marked with ``otherwise''.
The definition can be formulated compactly for $n\geq 1$ as 
\begin{eqnarray}
T(n, k) = \begin{cases}
1 & \text{if }n = 1 \text{ and } k = 0\\
0 & \text{if }n = 1 \text{ and } k \neq 0\\
\sum_{j = 0}^{n-1} T(n - 1, k - j) & \text{otherwise}.
\end{cases}\label{eq:Kagey}
\end{eqnarray}
See a proof in Appendix \ref{app:Mahonian_numbers}.

The next property will help us to obtain the value of $\averageswapdistance_{up}$ and $\averageswapdistance_{op}$.

\begin{property}
\label{prop:sum_of_distances}
The Mahonian numbers satisfy
\begin{eqnarray}
\sum_{k \geq 0} T(n, k) \cdot k = \frac{n(n-1)}{4}n!.
\label{eq:sum_of_distances}
\end{eqnarray}
\end{property}

\begin{proof}
We will prove this property by induction on $n$ is as follows.
\begin{itemize}
\item
{\em Base case, $n = 1$.} Trivially, 
\begin{eqnarray*}\sum_{k \geq 0} T(1, k) \cdot k = T(1, 0) \cdot 0 = 1 \cdot 0 = 0 = \frac{1 \cdot 0}{4} 1!.
\end{eqnarray*}
\item 
{\em Induction hypothesis}. Equation \ref{eq:sum_of_distances}.
\item
{\em Induction step.} We aim to show that 
\begin{eqnarray*}\sum_{k \geq 0} T(n+1, k) \cdot k = \frac{n(n+1)}{4}(n+1)!\end{eqnarray*}
for $n > 1$. 
\end{itemize}
As for the induction step, the well-known recursive definition of Mahonian numbers (Equation \ref{eq:Kagey}),
in particular 
\begin{eqnarray*}T(n, k) = \sum_{j=0}^{n-1} T(n-1, k-j),\end{eqnarray*}
leads to 
\begin{align*}
    \sum_{k \geq 0}T(n+1, k)\cdot k &= \sum_{k \geq 0} \left[ \sum_{j = 0}^n T(n, k-j) \right] \cdot k & \text{def. }  T(n, k)
\end{align*} 
Then, a series of mechanical algebraic manipulations yield
\begin{align*}    
    &= \sum_{j = 0}^n \sum_{k \geq 0} T(n, k-j) \cdot k & \text{swap summations}\\ 
    &= \sum_{j = 0}^n \sum_{k \geq j} T(n, k-j) \cdot k & T(n, k) = 0 \text{ for negative }k\\
    &= \sum_{j = 0}^n \sum_{k \geq 0} T(n, k) \cdot (k+j) & \text{change of variable } (k \leftarrow k - j)\\
    &= \sum_{j = 0}^n \sum_{k \geq 0} T(n, k) \cdot k+\sum_{j = 0}^n \sum_{k \geq 0} T(n, k) \cdot j & \text{split}\\
    &= \sum_{j = 0}^n \left[\frac{n(n-1)}{4}n!\right]+\sum_{j = 0}^n n! \cdot j & \text{induction hypothesis and Equation \ref{eq:factorial_property}}\\
    &= \left[\frac{n(n-1)}{4}n!\right] \sum_{j = 0}^n 1 + n!\sum_{j = 0}^n j & \\    
    &= \left[\frac{n(n-1)}{4}n!\right] (n+1) + n! \frac{n (n+1)}{2} & \\    
    &= \frac{n(n-1)}{4}(n+1)!+\frac{n}{2} (n + 1)!\\
    &= \frac{n(n+1)}{4}(n + 1)!.
\end{align*}
\end{proof}

\begin{property}
In a permutohedron of order $n$, the mean topological distances are 
\begin{eqnarray*}\averageswapdistance_{op} = \frac{1}{2}d_{max}\end{eqnarray*}
and
\begin{eqnarray*}\averageswapdistance_{up} = \frac{N}{2(N-1)}d_{max}.\end{eqnarray*}
\label{prop:mean_topological_distance}
\end{property}
\begin{proof}
Thanks to Property \ref{prop:sum_of_distances},
for any vertex $i$, the sum of the minimum distances from $i$ to any other vertex is 
\begin{eqnarray*}
\sum_{j = 1}^{N} d_{ij} = \sum_{k \geq 0} T(n, k) \cdot k = \frac{n (n - 1)}{4} N = \frac{d_{max}}{2}N
\end{eqnarray*}
thanks to the definition of $d_{max}$ (Equation \ref{eq:diameter}). Then
\begin{eqnarray*}
\sum_{i = 1}^{N}\sum_{j = 1}^{N} d_{ij} = \frac{d_{max}}{2} N^2.
\end{eqnarray*}
Recalling the definition of $\averageswapdistance_{op}$ (Equation \ref{eq:average_topopological_distance_all_unordered_pairs_bis}), we obtain
\begin{eqnarray*}\averageswapdistance_{op} = \frac{1}{2}d_{max},\end{eqnarray*}
while recalling the parallel definition of $\averageswapdistance_{up}$ (Equation \ref{eq:average_topopological_distance_all_unordered_pairs_bis}), we obtain
\begin{eqnarray*}\averageswapdistance_{up} = \frac{N}{2(N-1)} d_{max}.\end{eqnarray*}

\end{proof}
We define $\bar{i}$ as the reverse of order $i$. Then $d_{\bar{i}i} = d_{max}$.
The following property gives an exact relationship between $d_{ij}$ and $d_{\bar{i}j}$.
\begin{property}
\begin{eqnarray*}d_{\bar{i}j} = d_{max} - d_{ij}.\end{eqnarray*}
\label{prop:complementary_distance}
\end{property}
\begin{proof}
Consider a permutation $\pi=\pi_1,\dots,\pi_a,\dots,\pi_n$ of the integers in $[1,n]$.
$inv(\pi)$, the number of inversions of $\pi$, is defined as the number of pairs $a$ and $b$, such that $1 \leq a < b \leq n$ and $\pi_a > \pi_b$. It is easy to see that
\begin{eqnarray*}inv(i) + inv(\bar{i}) = {n \choose 2} = d_{max}.\end{eqnarray*}
We use $I$ to refer to the identity permutation, i.e. $I = 1,2,3,\dots,n$.
It is well-known that the number of inversions is equivalent to $d(\pi)$, the swap distance of $\pi$ to the identity permutation. \footnote{To understand the equivalence between swap distance and inversions, notice first that the number of inversions of a permutation equals the minimal number of simple transpositions, namely transpositions of adjacent elements, that are needed to write the permutation \url{https://www.findstat.org/StatisticsDatabase/St000018/}. In turn, the number of simple transpositions is our definition of swap distance. Indeed, the swap distance is the Coxeter length in the symmetric group, which is known to be equivalent to the number of inversions of a permutation \cite[Proposition 1.5.2]{Bjorner2005a}.
}
Thus
\begin{eqnarray*}d_{Ii} + d_{I\bar{i}} = {n \choose 2}.\end{eqnarray*}
If one takes vertex $j$ of the permutohedron as the reference permutation instead of the identity permutation, one obtains the desired result, i.e. 
\begin{eqnarray*}d_{ji} + d_{j\bar{i}} = {n \choose 2}.\end{eqnarray*}
The last step follows by the vertex transitivity of the permutohedron. 
\end{proof}

\subsection{Mahonian numbers}

\label{app:Mahonian_numbers}

Here we revisit a well-known property of Mahonian numbers that we have not found proven in standard publications. 
\begin{property}
$T(n, k)$ can be defined as
\begin{eqnarray*}
T(n, k) = \begin{cases}
1 & \text{if }n = 1 \text{ and } k = 0\\
0 & \text{if }n = 1 \text{ and } k \neq 0\\
\sum_{j = 0}^{n-1} T(n - 1, k - j) & \text{otherwise}.
\end{cases}
\end{eqnarray*}
\end{property}
\begin{proof}

Here we provide a proof for the statement of the property. 
First, the Mahonian numbers $T(n, k)$ are defined as the coefficients of the expansion of the product \citep{OEIS_A008302}
\begin{eqnarray*}
\prod_{k = 0}^{n-1} \sum_{j = 0}^k x^j,
\end{eqnarray*}
namely 
\begin{eqnarray*}\prod_{k = 0}^{n-1} \sum_{j = 0}^k x^j = \sum_i T(n, k) x^k.\end{eqnarray*}
For the case $n = 1$, the LHS equals $1$, which implies for the RHS that $T(1, 0) = 1$ and $T(1, k) = 0$ for every other $k \neq 0$. For $n > 1$,  
\begin{align*}
\sum_k T(n, k) x^k &= \prod_{k = 0}^{n - 1} \sum_{j = 0}^{k} x^j\\ 
                       &= \left(\prod_{k = 0}^{n - 2} \sum_{j = 0}^{k} x^j\right) \cdot \left(\sum_{j = 0}^{n-1} x^j\right) & \text{split} \\
                       &= \left(\sum_k T(n-1, k) x^k\right) \cdot \left(\sum_{j = 0}^{n-1} x^j\right) & \text{substitution} \\
                       &= \sum_k \sum_{j = 0}^{n-1} T(n - 1, k) x^{k+j}\\
                       &= \sum_k \sum_{j = 0}^{n-1} T(n - 1, k - j) x^k & \text{change of variable}
\end{align*}
and then, as the coefficients of the polynomials on both sides of the equality must be the same, we finally obtain
\begin{eqnarray*}
T(n, k) = \sum_{j = 0}^{n-1} T(n - 1, k - j).
\end{eqnarray*}
\end{proof}

\section{Properties of the new diversity index}

\label{app:mathematical_properties_of_average_swap_distance}

The following property indicates that the value of $\left< d| i\right>$ is the complementary of that of $\left< d| \bar{i}\right>$ with respect to the diameter.
\begin{property}
\begin{eqnarray*}
\left< d| i\right> + \left< d| \bar{i}\right> = d_{max},     
\end{eqnarray*}
\label{prop:symmetries}
\end{property}
\begin{proof}
Property \ref{prop:complementary_distance} transforms
Equation \ref{eq:average_swap_distance_from_source_order}
into 
\begin{eqnarray*}
\left< d| \bar{i}\right> & = & \sum_{j=1}^N p_j (d_{max} - d_{ij}) \\
                         & = & d_{max} - \left< d| i\right>.
\end{eqnarray*} 
\end{proof}

$\averageswapdistance$ can be rewritten equivalently as
\begin{eqnarray}
\averageswapdistance = \sum_{d=0}^{d_{max}} P(d)d = \sum_{d=1}^{d_{max}} P(d)d,
\label{eq:alternative_average_swap_distance}
\end{eqnarray}
where $P(d)$ is the probability mass of a certain distance $d$, i.e.  
\begin{eqnarray*}
P(c) = \sum_{i=1}^{N} \sum_{\substack{j=1 \\ d_{ij} = c}}^{N}
p_i p_j.
\end{eqnarray*}
Obviously,
\begin{eqnarray*}    
\sum_{d=0}^{d_{max}} P(d) = 1.
\end{eqnarray*}
Notice that  
\begin{eqnarray}
P(0) & = & \sum_{i=1}^{N} p_i \sum_{\substack{j=1 \\ d_{ij} = 0}}^{N} 
p_j \nonumber \\
     & = & \sum_{i=1}^{N} p_i^2 = S, \label{eq:P0_and_Simpson_index}
\end{eqnarray}
where $S$ is the Simpson index (Equation \ref{eq:Simpson_index}).
Notice also that $P(0) > 0$ because all the $p_i$'s cannot be zero.

\subsection{Lower and upper bounds of the average swap distance score}

\label{subsec:mathematical_properties_of_average_swap_distance}

\subsubsection{The minimum value of $\averageswapdistance$}

A tight lower bound of $\averageswapdistance$ is straightforward. 

\begin{property}
$0 \leq \averageswapdistance$ with equality if and only if just one of the $p_i$'s is non-zero. 
\end{property}
\begin{proof}
~\\
\indent 
Step 1. Show that $\averageswapdistance \geq 0$. Trivial since $0 \leq d_{ij}$ by definition.

Step 2. Show that $\averageswapdistance = 0$ if and only if all probability mass is on just one order. As $d_{ij} = 0$ if and only if $i=j$, then Equation \ref{eq:expanded_average_conditional_probability} indicates that 
$\averageswapdistance > 0$ if and only if $p_i p_j >0$ for some $i\neq j$.
\end{proof}

\subsubsection{An upper bound of $\averageswapdistance$}

Since the distance to a source order is a number $d$ such that $0 \leq d \leq d_{max}$, where $d_{max}$ is the diameter of the permutohedron, it follows trivially that $0 \leq \averageswapdistance \leq d_{max}$. However, we will show that $d_{max}$ is 
an upper bound that is too loose. Recall that the dominance index is the complementary of the Simpson index, i.e. $\dominanceindex = 1 - S$, with $S$ defined as in Equation \ref{eq:Simpson_index}).

\begin{property}
\begin{eqnarray*}
\averageswapdistance \leq d_{max} \dominanceindex,
\end{eqnarray*}
where $d_{max}$ is the diameter and $\dominanceindex$ is the dominance index.
\label{prop:upper_bound_Simpson_index}
\end{property}
\begin{proof}
Equation \ref{eq:alternative_average_swap_distance} leads to 
\begin{eqnarray*}
\averageswapdistance & = & \sum_{d=1}^{d_{max}-1} P(d)d + d_{max} \left(1 - \sum_{d=0}^{d_{max} - 1} P(d) \right) \\
  & = & d_{max}(1-P(0)) - \sum_{d=1}^{d_{max}-1} (d_{max} - d) P(d) \\
  & \leq & d_{max}(1 - P(0)). \\
\end{eqnarray*} 
Recalling that $P(0) = S$ (Equation \ref{eq:P0_and_Simpson_index}) and $\dominanceindex = 1 - S$, we conclude 
$\averageswapdistance \leq d_{max}\dominanceindex.$
\end{proof}

\subsubsection{The maximum value of $\averageswapdistance$}

$H$ achieves its maximum value when all orders are equally likely. We will show first that the value of $\averageswapdistance$ achieved in that situation is also achieved by another configuration according to the following property.
Later on, we will conjecture that this value is also maximum for $\averageswapdistance$.
 
\begin{property}
\label{prop:average_swap_distance equally_likely_orders_and_more}
\begin{eqnarray*}\averageswapdistance = \frac{1}{2}d_{max}\end{eqnarray*}
in at least two configurations 
\begin{itemize}
\item 
{\em Configuration 1.} $p_i = 1/N$ for any $i$ such that $1 \leq i \leq N$. 
\item 
{\em Configuration 2.} $p_i = 0$ except for two orders, $k$ and $l$, such that $d_{kl} = d_{max}$ and $p_k = p_l = 1/2$.
\end{itemize}
\end{property}
\begin{proof}
For the 1st configuration, Equations \ref{eq:expanded_average_conditional_probability} and \ref{eq:average_topopological_distance_all_ordered_pairs} and Property \ref{prop:mean_topological_distance} give
\begin{eqnarray*}
\averageswapdistance = d_{op} = \frac{1}{2}d_{max}.
\end{eqnarray*}
For the 2nd configuration, Equation \ref{eq:expanded_average_conditional_probability} gives 
\begin{eqnarray*}\averageswapdistance = 2 d_{max} p_k p_l = \frac{1}{2}d_{max}.\end{eqnarray*}

\end{proof}

$\averageswapdistance$ can be defined as product involving the vector of probabilities 
$$p=(p_1,\dots,p_i,\dots,p_N)$$ 
and $D = \left\{d_{ij} \right\}$, the matrix of vertex-vertex distances in the permutohedron, i.e.
\begin{eqnarray}
\averageswapdistance = p^T D p.
\label{eq:average_swap_distance_in_matrix_algebra}
\end{eqnarray}
Recall that $\sum_i p_i = 1$.

In that setting, the following lemma will help us to obtain tight upper bounds of $\averageswapdistance$.

\begin{lemma} 
\label{lem:maximum_average_swap_distance}
Let $Q$ be a real symmetric matrix of size $N \times N$ and let $k$ be a real constant. Consider the quadratic optimization problem of the form $\mathbf{max}_{x\in \mathbb{R}^N}\ x^T Q x$ under the constraint $\sum x_i = 1$. 
Let $O$ be the all-ones matrix of the same size as $Q$. 
If the matrix $Q - Ok$ is negative semidefinite then  
\begin{eqnarray*}\mathbf{max}_{x\in \mathbb{R}^N}\ x^T Q x \leq k.\end{eqnarray*}
\end{lemma}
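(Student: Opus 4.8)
The plan is to exploit the single structural fact that makes the all-ones matrix $O$ appear in the hypothesis: for any vector $x \in \mathbb{R}^N$ we have
\begin{eqnarray*}
x^T O x = \sum_{i=1}^N \sum_{j=1}^N x_i x_j = \left( \sum_{i=1}^N x_i \right)^2.
\end{eqnarray*}
On the feasible set defined by the constraint $\sum_i x_i = 1$, this quadratic form collapses to the constant $1$, so that $x^T (kO) x = k$ for every admissible $x$. This is the one observation the whole argument turns on.

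First I would translate the negative-semidefiniteness hypothesis into its defining inequality: $Q - kO$ being negative semidefinite means $x^T (Q - kO) x \leq 0$ for all $x \in \mathbb{R}^N$, with no constraint attached. Next I would simply restrict attention to those $x$ satisfying $\sum_i x_i = 1$ and expand the quadratic form by linearity,
\begin{eqnarray*}
x^T (Q - kO) x = x^T Q x - k\, x^T O x \leq 0,
\end{eqnarray*}
and then substitute $x^T O x = 1$ from the preceding computation. Rearranging yields $x^T Q x \leq k$ for every $x$ with $\sum_i x_i = 1$.

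Finally I would take the maximum over all such $x$ to obtain $\mathbf{max}_{x \in \mathbb{R}^N}\ x^T Q x \leq k$ under the stated constraint, which is exactly the conclusion. There is no genuine obstacle here: the proof is a two-line computation once one notices that the constraint $\sum_i x_i = 1$ freezes the form $x^T O x$ at the value $1$. The only point deserving a word of care is that the semidefiniteness inequality holds for \emph{all} of $\mathbb{R}^N$, so it certainly holds on the constraint hyperplane; I would state this explicitly to make clear why no Lagrange-multiplier machinery or KKT analysis is needed, the bound being obtained directly rather than by locating a maximizer.
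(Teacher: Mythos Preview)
Your proposal is correct and follows essentially the same approach as the paper: both arguments hinge on the observation that $x^T O x = 1$ under the constraint $\sum_i x_i = 1$, then apply the defining inequality of negative semidefiniteness to $Q - kO$ and rearrange. Your write-up is slightly more explicit (you spell out $x^T O x = (\sum_i x_i)^2$ and note that the semidefiniteness inequality on all of $\mathbb{R}^N$ restricts trivially to the constraint hyperplane), but the underlying logic is identical.
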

\begin{proof}
    We have that 
    \begin{eqnarray*}\mathbf{max}\ x^T (Q - Ok) x = \mathbf{max}\ x^T Q x - k,\end{eqnarray*} 
    because $x^T (Ok) x = k x^T O x = k$ thanks to $\sum x_i = 1$. \\
    If $(Q - Ok)$ is a negative semidefinite matrix, by definition of negative semidefinite, we have that for every $x$, $x^T(Q - Ok)x \leq 0$, which implies that $\mathbf{max}\ x^T (Q - Ok) x =  \mathbf{max}\ x^T Q x - k \leq 0$ and finally $\mathbf{max}_{x\in \mathbb{R}^N}\ x^T Q x \leq k$.
\end{proof}

We conjecture that 
\begin{eqnarray*}
\averageswapdistance \leq \frac{d_{max}}{2}
\end{eqnarray*}
for any $n \geq 1$. 
If the conjecture is true it will always yield a tight upper bound thanks to Property \ref{prop:average_swap_distance equally_likely_orders_and_more}
The following property states that this is true at least up to $n = \nmax$. 
\begin{property}
\label{prop:global_upper_bound}
For $1 \leq n \leq \nmax$,
\begin{eqnarray*}
\averageswapdistance \leq \frac{d_{max}}{2}
\end{eqnarray*}
with equality if the $p_i$'s satisfy one of the conditions indicated in Property \ref{prop:average_swap_distance equally_likely_orders_and_more}.
\end{property}

\begin{proof}

The conjecture is trivially true for $n \leq 2$ (recall Property \ref{prop:average_swap_distance equally_likely_orders_and_more}).
For $n > 2$, the proof consists of two steps
\begin{enumerate}
\item
Apply Property \ref{prop:average_swap_distance equally_likely_orders_and_more} to obtain configurations such that $\averageswapdistance = d_{max}/2$.
\item 
\label{it:step2_lower_bound}
Show that $\averageswapdistance \leq d_{max}/2$. The argument is based on the equivalence between a negative semi-definite matrix and a matrix whose all eigenvalues are non-positive. 
The procedure consists of the following steps: 
\begin{enumerate}
\item  
Computing $D$ by means of a breadth-first traversal of the permutohedron of order $n$ from an initial arbitrary order. 
\item
Computing the eigenvalues of the matrix $D'=D - \frac{d_{max}}{2}O$, where $O$ is the all-one matrix of the same size as $D$.
\item 
Checking that $D'$ is negative semi-definite by checking that all its eigenvalues are non-positive. We compute the eigenvalues of $D'$ by means of the Implicitly Restarted Arnoldi Method (IRAM). \footnote{In particular, we called the function {\tt scipy.linalg.eigvals} available in the {\tt scipy} Python library. The function is a wrapper for the ARPACK implementation of the method. } 
\item 
Invoking Lemma \ref{lem:maximum_average_swap_distance}, to conclude that $d_{max}/2$ is an upper bound of $\averageswapdistance$.
\end{enumerate}
\end{enumerate}
Next we apply Step \ref{it:step2_lower_bound} to specific values of $n$. When $n=3$, $d_{max}/2 = 3/2$, $D$ can be defined as 
\begin{eqnarray*}
D = \begin{pmatrix} 
0 & 1 & 1 & 2 & 2 & 3 \\
1 & 0 & 2 & 1 & 3 & 2\\
1 & 2 & 0 & 3 & 1 & 2\\
2 & 1 & 3 & 0 & 2 & 1\\
2 & 3 & 1 & 2 & 0 & 1\\
3 & 2 & 2 & 1 & 1 & 0\end{pmatrix}
\end{eqnarray*}
and then the eigenvalues of the matrix
\begin{eqnarray*}
D - \frac{3}{2}O = \begin{pmatrix} 
- \frac{3}{2} & - \frac{1}{2} & - \frac{1}{2} & \frac{1}{2} & \frac{1}{2} & \frac{3}{2} \\
- \frac{1}{2} & - \frac{3}{2} & \frac{1}{2} & - \frac{1}{2} & \frac{3}{2} & \frac{1}{2}\\
- \frac{1}{2} & \frac{1}{2} & - \frac{3}{2} & \frac{3}{2} & - \frac{1}{2} & \frac{1}{2}\\
\frac{1}{2} & - \frac{1}{2} & \frac{3}{2} & - \frac{3}{2} & \frac{1}{2} & - \frac{1}{2}\\
\frac{1}{2} & \frac{3}{2} & - \frac{1}{2} & \frac{1}{2} & - \frac{3}{2} & - \frac{1}{2}\\
\frac{3}{2} & \frac{1}{2} & \frac{1}{2} & - \frac{1}{2} & - \frac{1}{2} & - \frac{3}{2}
\end{pmatrix}
\end{eqnarray*}
are
\begin{eqnarray*}
\lambda_1 = \lambda_2 = & -4\\
\lambda_3 = & -1\\
\lambda_4 = \lambda_5 = \lambda_6 = & 0.
\end{eqnarray*}
Table \ref{tab:eigenvalues} shows that the eigenvalues of the matrix $D - Od_{max}/2$ are all negative for $3 \leq n \leq \nmax$.

\begin{table}
\caption{\label{tab:eigenvalues}
The eigenvalues of the matrix $D - Od_{max}/2$ as a function of $n$, the order of the permutohedron.}
\centering
\begin{tabular}{@{} lllll @{}}
\hline
&   & \multicolumn{3}{c}{eigenvalues} \\
\cmidrule(lr){3-5} 
$n$ & $d_{max}/2$ & group 1 & group 2 & group 3 \\
\hline
3   & 1.5 & $\lambda_{1,2} = -4$ & $\lambda_3 = -1$ & $\lambda_{4,5,6} = 0$ \\
4   & 3 & $\lambda_{1,2,3} = -20$ & $\lambda_{4,5,6} = -4$ & 
$\lambda_{7,\dots,24} = 0$ \\
5   & 5 & $\lambda_{1,2,3,4} = -120$ & $\lambda_{4,\dots,10} = -20$ & $\lambda_{11,\dots,120} = 0$ \\
6   & 7.5 & $\lambda_{1,\dots,5} = -840$ & $\lambda_{6,\dots,15} = -120$ & $\lambda_{16,\dots,720} = 0$ \\
7   & 10.5 & $\lambda_{1,\dots,6} = -6720$ & $\lambda_{7,\dots,21} = -840$ & $\lambda_{22,\dots,5040} = 0$ \\
\hline
\end{tabular}
\end{table}

\end{proof}

\subsection{A compact expression for $\averageswapdistance$ when $n=3$.}


\label{app:complementary_results_triplets}

Next we will introduce a new expression for $\averageswapdistance$ for $n=3$ that is computationally efficient and that leads to straightforward tight upper bounds for $\averageswapdistance$.

\begin{property}

In triplets, $\averageswapdistance$ can be expressed equivalently as
\begin{eqnarray*} 
\averageswapdistance =  \frac{3 - \Delta}{2},
\end{eqnarray*}
where $\Delta$ can be defined in two equivalent ways. First,
\begin{eqnarray*}\Delta = 3[P(0) - P(3)] + P(1) - P(2).\end{eqnarray*}
Second, let $q_i = p_i - p_{\bar{i}}$ and assume that vertices are labelled with numbers from 1 to 6 starting from some arbitrary vertex of the permutohedron (Figure \ref{fig:permutohedron_3_and_4} top) and continuing in a clockwise sense (or anticlockwise sense).
Then 
\begin{eqnarray}
\Delta & = & 3 \sum_{i=1}^3 q_i^2 - 2(q_1 q_3 - q_1 q_2 - q_2 q_3) \nonumber \\ 
       & = & \sum_{i=1}^3 q_i^2 + (q_1 + q_2)^2 + (q_2 + q_3)^2 + (q_1 - q_3)^2. \label{eq:fancy_Delta}
\end{eqnarray}

\end{property}
\begin{proof} 
In triplets, 
$\averageswapdistance$ is
\begin{eqnarray*}\averageswapdistance_a = P(1) + 2P(2) + 3P(3).\end{eqnarray*} 
Since 
\begin{eqnarray*}
\averageswapdistance + 2 P(1) + P(2) & = & 3 [P(1) + P(2) + P(3)] \\    
  & = & 3[1 - P(0)],
\end{eqnarray*}
$\averageswapdistance$ is also
\begin{eqnarray*}\averageswapdistance_b = 3[1 - P(0)] - 2P(1) - P(2).\end{eqnarray*} 
Therefore 
\begin{eqnarray*}
\averageswapdistance & = & \frac{1}{2}(\averageswapdistance_a + \averageswapdistance_b) \\
                     & = & \frac{1}{2}(3 - \Delta), 
\end{eqnarray*}
where \begin{eqnarray*}\Delta = 3[P(0) - P(3)] + P(1) - P(2).\end{eqnarray*}

In triplets, the definition of $\averageswapdistance$ can be split as
\begin{eqnarray*}
\averageswapdistance_1 & = & \sum_{i=1}^3 p_i \left<d | i \right> + \sum_{i=4}^6 p_i \left<d | i \right>.
\end{eqnarray*}
Applying Property \ref{prop:symmetries} to the second summation, $\averageswapdistance$ can be expressed as
\begin{eqnarray*}
\averageswapdistance_1 & = & \sum_{i=1}^3 p_i \left<d | i \right> + \sum_{i=4}^6 p_i (3 - \left<d | \bar{i} \right>)
\end{eqnarray*}
Assuming the vertex labelling indicated in the statement,
we obtain
\begin{eqnarray*}
\averageswapdistance_1 & = & \sum_{i=1}^3 p_i \left<d | i \right> + \sum_{i=1}^3 p_{\bar{i}} (3 - \left<d | i \right>) \\
  & = & \sum_{i=1}^3 \left[(p_i - p_{\bar{i}}) \left<d | i \right> + 3 p_{\bar{i}} \right].
\end{eqnarray*}
Analogously, the application of Property \ref{prop:symmetries} to the first summation, $\averageswapdistance$ yields
\begin{eqnarray*}
\averageswapdistance_2 & = & \sum_{i=1}^3 p_i (3 - \left<d | i \right>) + \sum_{i=4}^6 p_i \left<d | i \right> \\
& = & \sum_{i=1}^3 p_i (3 - \left<d | i \right>) + \sum_{i=1}^3 p_{\bar{i}} \left<d | \bar{i} \right> \\
  & = & \sum_{i=1}^3 \left[(p_{\bar{i}} - p_i) \left<d | \bar{i} \right> + 3 p_i \right].
\end{eqnarray*}
Let 
\begin{equation*}
\Delta' = \sum_{i=1}^3 (p_i - p_{\bar{i}}) (\left<d | i \right> - \left<d | \bar{i} \right>) .
\end{equation*}    
Then 
\begin{eqnarray*}
\averageswapdistance_1 + \averageswapdistance_2 & = & \Delta' + 3\sum_{i=1}^3 (p_i + p_{\bar{i}}) \\
& = & \Delta' + 3\sum_{i=1}^6 p_i \\
& = & \Delta' + 3.
\end{eqnarray*}
Finally, the fact that $\averageswapdistance_1 = \averageswapdistance_2$, 
leads to a new expression for $\averageswapdistance$ as
\begin{eqnarray*}
\averageswapdistance & = &\frac{1}{2}(\averageswapdistance_1 + \averageswapdistance_2) \\
& = & \frac{3 + \Delta'}{2},
\end{eqnarray*}
where  
\begin{eqnarray*}
\Delta' & = & (p_1 - p_4)[p_2 + 2p_3 + 3p_4 + 2p_5 + p_6 - (3p_1 + 2p_2 + p_3 + p_5 + 2p_6)] + \\ 
        &  & (p_2 - p_5)[p_1 + p_3 + 2p_4 + 3p_5 + 2p_6 - (2p_1 + 3p_2 + 2p_3 + p_4 + p_6)] + \\
        &  & (p_3 - p_6) [2p_1 + p_2 + p_4 + 2p_5 + 3p_6 - (p_1 + 2p_2 + 3p_3 + 2p_4 + p_5)] \\
  & = & -3(p_1 - p_4)^2 - 3(p_2 - p_5)^2 - 3(p_3 - p_6)^2 + \\ 
 &  & (p_1 - p_4)[(p_3 - p_6) - (p_2 - p_5)] + \\ 
 &  & (p_2 - p_5)[-(p_1 - p_4) - (p_3 - p_6)] + \\ 
 &  & (p_3 - p_6) [(p_1 - p_4) - (p_2 - p_5)].
\end{eqnarray*}
Notice that $p_{4} = p_{\bar{1}}$, $p_{5} = p_{\bar{2}}$, $p_{6} = p_{\bar{3}}$. Then the 
substitution $q_i = p_i - p_{\bar{i}}$ transforms the last expression for $\Delta'$ into
\begin{eqnarray*}
\Delta' & = & q_1(q_3 - q_2) - q_2(q_1 + q_3) + q_3(q_1 - q_2) - 3 \sum_{i=1}^3 q_i^2 \\
       & = & 2(q_1 q_3 - q_1 q_2 - q_2 q_3) - 3 \sum_{i=1}^3 q_i^2.
\end{eqnarray*}
The substitutions 
\begin{eqnarray*}
(q_1 + q_2)^2 = q_1^2 + 2q_1q_2 + q_2^2 \\
(q_2 + q_3)^2 = q_2^2 + 2q_2q_3 + q_3^2 \\
(q_1 - q_3)^2 = q_1^2 - 2q_1q_3 + q_3^2
\end{eqnarray*}
lead to 
\begin{eqnarray*}
\Delta' & = & - \left[(q_1 + q_2)^2 + (q_2 + q_3)^2 + (q_1 - q_3)^2 + \sum_{i=1}^3 q_i^2 \right] \\
        & = & - \Delta
\end{eqnarray*}
after some algebra. 
\end{proof}

The compact formula for $\left<d\right>$ above yields an alternative proof of Property \ref{prop:global_upper_bound} when 
$n=3$. Notice that  
\begin{eqnarray*}\averageswapdistance = \frac{3-\Delta}{2} \leq \frac{3}{2}\end{eqnarray*} because $\Delta \geq 0$ (recall Equation \ref{eq:fancy_Delta}).

\section{Null hypotheses}

\label{app:null_hypotheses}

Here we derive the expectation of the swap distance score and entropy under certain null hypotheses.
 
\label{subsec:random_baselines}

\subsection{Die rolling experiment}

We define $\multinomial$ as a multinomial distribution where $F$ is the number of trials, $N$ is the number of bins and the probability of each bin is $1/N$. We define $f$ as the vector $f = f_1,\dots,f_i,\dots,f_N$ where $f_i$ is the number of times bin $i$ has been chosen.

\begin{lemma} 
\label{lem:help_multinomial}
When $f \sim \multinomial$, the expected value of the product $f_i f_j$ when $i \neq j$ is 
\begin{eqnarray*}\mathbb{E}_{i \neq j}\left[ f_i f_j \middle| f \sim \multinomial \right] = \frac{F (F - 1)}{N^2}.\end{eqnarray*}
\end{lemma}
\begin{proof}
By definition
\begin{eqnarray*}f_i = \sum_{k=1}^F a_{ik},\end{eqnarray*}
where $a_{ik}$ is a Bernoulli random variable that indicates if the $i$-th bin has received a ball in the $k$-th trial. Then
\begin{eqnarray*}
f_i f_j = \sum_{k=1}^F a_{ik} \sum_{l=1}^F a_{jl}.
\end{eqnarray*}
By the linearity of expectation,
\begin{eqnarray*}
\mathbb{E}_{i\neq j}[f_i f_j] = \sum_{k=1}^F \sum_{l=1}^F \mathbb{E}_{i\neq j}[a_{ik}a_{jl}].
\end{eqnarray*}
When $l=k$, $\mathbb{E}_{i\neq j}[a_{ik}a_{jl}] = 0$. When $l \neq k$, $a_{ik}a_{jl}= 1$ with probability $1/N^2$. 
Hence
\begin{eqnarray*}
\mathbb{E}_{i\neq j}[f_i f_j] & = & \sum_{k=1}^F \sum_{\substack{l=1\\l \neq k}}^F \frac{1}{N^2} \\
    & = & \frac{F(F-1)}{N^2}.
\end{eqnarray*}
\end{proof}

\begin{property}
The expected value of $\averageswapdistance$ in a permutohedron of order $n$ when $f$ is generated by rolling a die $F$ times ($F > 0$) is  
\begin{equation*}
\averageswapdistance_{dr} = \mathbb{E}[\averageswapdistance] = \frac{F-1}{F} \frac{d_{max}}{2}.
\end{equation*}
\end{property}
\begin{proof}  
By the linearity of expectation over the definition of $\averageswapdistance$ (Equation \ref{eq:average_swap_distance}) with $N = n!$ and $p_i = f_i/T$ and thanks to $d_{ii} = 0$, we obtain
\begin{eqnarray*} 
\mathbb{E}[\averageswapdistance] &= \frac{1}{F^2} \sum_{i=1}^{N} \sum_{\substack{j=1\\j\neq i}}^{N} d_{ij} \mathbb{E}_{i \neq j}\left[f_i f_j| f \sim \multinomial\right].
\end{eqnarray*}
By invoking Lemma \ref{lem:help_multinomial}, we obtain 
\begin{eqnarray*}
\mathbb{E}[\averageswapdistance] &= & \frac{1}{F^2} \sum_{i=1}^{N} \sum_{\substack{j=1\\j\neq i}}^{N} d_{ij} \frac{F (F - 1)}{N^2}\\
&= & \frac{F - 1}{F} \frac{1}{N^2}\sum_{i=1}^{N} \sum_{\substack{j=1\\j\neq i}}^{N} d_{ij} \\
&= & \frac{F - 1}{F} \averageswapdistance_{op} .
\end{eqnarray*}
Thanks to Property \ref{prop:mean_topological_distance},
we finally obtain 
\begin{eqnarray*}
\mathbb{E}[\averageswapdistance] = \frac{F - 1}{F} \frac{d_{max}}{2}.
\end{eqnarray*}
\end{proof}

The following property gives an expression for the expectation of $H$ under the die rolling null hypothesis. 

\begin{property}
When $f \sim \multinomial$ and $H$ is computed with the plug-in estimator, the expected value of $H$ is
\begin{eqnarray*}H_{dr} = \mathbb{E}[H] =  \log F - \frac{N}{F} \sum_{a=0}^F p(F, 1/N, a) a \log a,\end{eqnarray*}
where 
\begin{eqnarray*}
p(F, q, a) & = & {F \choose a} q^a \left(1 - q \right)^{F - a}.\\
q & = & \frac{1}{N}       
\end{eqnarray*}
\end{property}
\begin{proof}
Thanks to \ref{eq:entropy_plug-in_estimator_expanded},
\begin{eqnarray*}
\mathbb{E}[H] & = & \log F - \frac{1}{F}\sum_{i=1}^{N} \mathbb{E}[f_i \log f_i] \\
              & = & \log F - \frac{N}{F}\mathbb{E}[f_i \log f_i].
\end{eqnarray*}
$\mathbb{E}[f_i \log f_i]$ is the expected value of $f_i \log f_i$ knowing that $f_i$ follows a binomial distribution with parameters $F$ and $1/N$. 
Then 
\begin{eqnarray*}
\mathbb{E}[f_i \log f_i] = \sum_{a = 2}^F p(F, q, a) a \log a,
\end{eqnarray*}
where $p(F, q, a)$ is the probability of $a$ successes over $F$ trials with a probability of success of $q = 1/N$.
\end{proof}

\subsection{Random permutation}

\begin{property}
\label{prop:expected_average_swap_distance_random_permutation}
The expected value of $\averageswapdistance$ when the original $p$ is replaced by a uniformly random permutation of it is given by the expression
\begin{equation*}
\averageswapdistance_{rp} = \mathbb{E}[\averageswapdistance] = \averageswapdistance_{up} \dominanceindex,
\end{equation*}
where $\dominanceindex$ is the dominance index and $\averageswapdistance_{up}$ is the average distance in the permutohedron over unordered pairs.
\end{property}

\begin{proof}
Recall that $N$ is the number of vertices of the permutohedron of order $n$ ($N = n!$).
We define $\pi$ as a permutation function, namely a one-to-one mapping between natural numbers in $[1, N]$ and natural numbers in $[1, N]$. 
By definition of $\averageswapdistance$ and by the linearity of expectation
\begin{eqnarray*}
    \mathbb{E}[\averageswapdistance]  & = & \sum_{i=1}^{N}\sum_{j=1}^{N}d_{ij} \mathbb{E}[p_i p_j],
\end{eqnarray*}
where
\begin{eqnarray*}
    \mathbb{E}[p_i p_j] = \frac{1}{N!} \sum_{\pi} p_{\pi(i)} p_{\pi(j)}.
\end{eqnarray*} 
If $i = j$, $d_{ij} = 0$, which implies that $d_{ij} \mathbb{E}[p_i p_j] = 0$.
If $i \neq j$, then
\begin{eqnarray*}
    \sum_{\pi} p_{\pi(i)} p_{\pi(j)} &=& 2 \cdot \frac{N!}{N(N-1)} \sum_{i=1}^{N} p_i \sum_{j=i+1}^{N} p_j\\
     &=& 2 \cdot \frac{N!}{N(N-1)} \frac{1}{2} \left( \sum_{i=1}^{N} \sum_{j=1}^{N} p_i p_j - \sum_{i=1}^{N} p_i^2 \right)\\
     &=& \frac{N!}{N(N-1)} \left( 1 - \sum_{i=1}^{N} p_i^2 \right)\\
     &=& \frac{N!}{N(N-1)} \dominanceindex,\\
\end{eqnarray*}
which in turn gives
\begin{eqnarray*}
    \mathbb{E}[p_i p_j] &=& \frac{1}{N!} \sum_{\pi} p_{\pi(i)} p_{\pi(j)}\\
    &=& \frac{1}{N(N-1)} \dominanceindex.
\end{eqnarray*} 
Finally, 
\begin{eqnarray*}
     \mathbb{E}[\averageswapdistance]  &=& \sum_{i=1}^{N}\sum_{j=1}^{N}d_{ij} \mathbb{E}[p_i p_j]\\
      &=& \frac{1}{N(N-1)} \dominanceindex \sum_{i=1}^{N}\sum_{j=1}^{N}d_{ij}\\
      &=& \averageswapdistance_{up} \dominanceindex.
\end{eqnarray*}
\end{proof}

\subsection{The $p$-value of the random permutation test}

\label{subsec:p_value_permutation_test}

We aim to find lower bounds of $\mathbb{P}$, the left $p$-value of the permutation test, to understand the statistical power of the permutation test, namely its {\em a priori} capacity to reject the null hypothesis. By definition, 
\begin{eqnarray*}
\mathbb{P}\geq \frac{1}{N!}.
\end{eqnarray*}
However, we will show that $\mathbb{P}$ is predetermined to be ``large'' specially when $n=3$. 

Given a vector of probabilities $p$, every permutation can be represented by a 0-1 matrix $\Pi$ with exactly one 1 per row and column, where the permuted vector is $\Pi p$. We have already seen that $\averageswapdistance$ can be expressed in matrix form as in Equation \ref{eq:average_swap_distance_in_matrix_algebra}.
We say that a permutation $\Pi$ is ``unchanging" with respect to $p$ if
\begin{eqnarray*} 
p^T\Pi^T D \Pi p = p^T D p,
\end{eqnarray*}
where $D$ is the distance matrix of the permutohedron. Equivalently, we can also define the notion of unchanging permutation using a bijective function $\pi: [1,N] \rightarrow [1,N]$. Then $\pi$  is unchanging with respect to $p$ if 
\begin{eqnarray*}
\averageswapdistance = \averageswapdistance_\pi = \sum_{i=1}^N \sum_{j=1}^N d_{\pi(i)\pi(j)} p_i p_j.
\end{eqnarray*}

We define $\mathbb{P}_=$ as the proportion of permutations with same $\averageswapdistance$ as in the original vector $p$, namely $\mathbb{P}_=$ is the proportion of unchanging permutations with respect to $p$. 

We define a mask as a subset of vertices of the permutohedron that have non-zero probability. We represent that mask as a binary vector $s = (s_1,\dots,s_i,\dots, s_N) \in \{0,1\}^N$ with exactly $m$ ones. A vector of probabilities $p$ is said to follow the mask when $p_i = 0$ if and only if $s_i = 0$. 
Given a mask, $\mathbb{P}_m$ is the proportion of permutations $\Pi$ 
such that $\Pi$ is unchanging with respect to every vector $p$ that follows the mask.

We define an equivalence relation between masks. Two masks are equivalent if and only if a graph automorphism of the permutohedron produces one mask from the other. Any two masks in the same equivalence class yield the same $\mathbb{P}_m$. Distinct equivalence classes may yield the same $\mathbb{P}_m$. 

We define $\mathbb{P}_A$, as the proportion of permutations of the vertices that are automorphisms of the permutohedron graph. Recall that an automorphism of a graph is a permutation of its vertices which brings the graph into itself. 

To recap, $\mathbb{P}_=$ gives a lower bound to $\mathbb{P}$ based on both the vector $p$ and the permutohedron graph. $\mathbb{P}_m$ gives a lower bound to $\mathbb{P}$ based on both the placement of the non-zero elements of $p$ and the permutohedron graph. $\mathbb{P}_A$ gives a lower bound to $\mathbb{P}$ based just on the permutohedron graph. Notice that only $\mathbb{P}_=$ takes the actual vector $p$ of probabilities as input.

The following property explains why the permutation test lacks statistical power when $n=3$ by showing that $\mathbb{P} \geq \frac{1}{60}$.
\begin{property}
\label{prop:statistical_power}
The left $p$-value of the permutation test obeys the following chain of inequalities
\begin{eqnarray*}
\mathbb{P} \geq \mathbb{P}_= \geq \mathbb{P}_m \geq \mathbb{P}_A \geq \frac{1}{N!}.
\end{eqnarray*}
When $n = 3$,
\begin{eqnarray*}
\mathbb{P}_A = \frac{1}{60}.\end{eqnarray*}
When $n = 4$,
\begin{eqnarray*}
\mathbb{P}_A = \frac{2}{23!} \approx 7.74 \cdot 10^{-23}.\end{eqnarray*} 
Let $m$ be the number of non-zero probability orders in $p$. The value of $\mathbb{P}_m$ as a function of $m$ is shown in Table \ref{tab:same_average_swap_distance} for $n = 3$ or $n = 4$ with $m \leq 3$ and in 
Table \ref{tab:same_average_swap_distance_4_4} for $n = m = 4$.
\end{property}

\begin{proof}
\begin{eqnarray*}
\mathbb{P}_A = \frac{A}{N!},\end{eqnarray*}
where $A$ is the number of automorphisms. Here we compute $A$ using the BLISS algorithm \citep{Junttila2007a}. \footnote{We used the implementation available in the library {\tt igraph} for {\tt R}, \url{https://igraph.org/r/doc/automorphisms.html}.}
When $n=3$, $A=12$ and then 
\begin{eqnarray*}
\mathbb{P}_A = \frac{1}{60}.\end{eqnarray*}
Alternatively, $A=12$ can be derived as the number of symmetries of a hexagon. 
Without loss of generality, suppose that the permutohedron of order 3 is plotted as a regular hexagon in a two dimensional space (Figure \ref{fig:permutohedron_3_and_4}). That hexagon has six rotational symmetries and six reflection symmetries, which constitute the dihedral group $D_6$ \cite[Chapter 2, p. 25]{Allan1984a}.
When $n=4$, $A=48$ and then the BLISS algorithm yields
\begin{eqnarray*}
\mathbb{P}_A = \frac{2}{23!}.\end{eqnarray*}


Given a value of $m$, $\mathbb{P}_m$ was calculated by means of a brute force method over the space of all masks with some shortcuts. The main shortcut is that the exploration is carried out in fact on the sequences of $m$ distinct vertices of the permutohedron. Each of those sequences encodes both the mask and the permutation of the non-zero probabilities. The exploration over these sequences takes advantage of the fact that the permutations that only exchange zero probabilities are redundant for computing the proportion that defines $\mathbb{P}_m$. The method is described in Algorithm \ref{alg:algorithm}.

\begin{algorithm}
\caption{\label{alg:algorithm} Sketch of the algorithm to compute $\mathbb{P}_m$ for each class given $m$.}
\begin{enumerate}
\item
\label{item:sequence_generation} 
Generate all the sequences of $m$ distinct vertices of the permutohedron where the 1st vertex is the same (the latter is a trick to save computation time). 
\item
For each sequence $\sigma$, set $g(\sigma)$ to 1.
\item 
For every unordered pair of distinct sequences $\sigma_1$, $\sigma_2$
  \begin{enumerate}
  \item 
  Given some sequence $\sigma$, $\averageswapdistance_{\sigma}$ is defined as
  \begin{eqnarray*}
  \averageswapdistance_{\sigma} = \sum_{i=1}^m \sum_{j=1}^m d_{\sigma(i)\sigma(j)}
  p_i p_j,
  \end{eqnarray*}
  where $\sigma(i)$ is the $i$-th vertex in $\sigma$.
  \item
  Compute $\averageswapdistance_{\sigma_1}$ and $\averageswapdistance_{\sigma_2}$.
  \item 
  Compare the coefficients of the polynomial for $\averageswapdistance_{\sigma_1}$ against the coefficients of the polynomial for $\averageswapdistance_{\sigma_2}$.
  \item 
  If $\averageswapdistance_{\sigma_1} = \averageswapdistance_{\sigma_2}$ then add 1 to $g(\sigma_1)$ and to $g(\sigma_2)$.
  \end{enumerate}
\item 
$\sigma_1$ and $\sigma_2$ belong to the same class if and only if 
$\averageswapdistance_{\sigma_1} = \averageswapdistance_{\sigma_2}$.
\item  
For each $\sigma$, $\mathbb{P}_m = g(\sigma)/G$ where $G$ is the number of sequences in Step \ref{item:sequence_generation}.
\end{enumerate}
\end{algorithm}

Each class can be identified by a member of its class, namely $m$ distinct vertices that have non-zero probability. However, for $n = 3$ or $n=4$ with $m \leq 3$, each class can be represented compactly by the multiset of pairwise swap distances between the vertices of the permutohedron that have non-zero frequency. Table \ref{tab:same_average_swap_distance} shows these multisets and their corresponding value of $\mathbb{P}_m$ as a function of $m$ for $n = 3$ or $n=4$ with $m \leq 3$.

\begin{figure}
\centering
\includegraphics[width = 0.7 \textwidth]{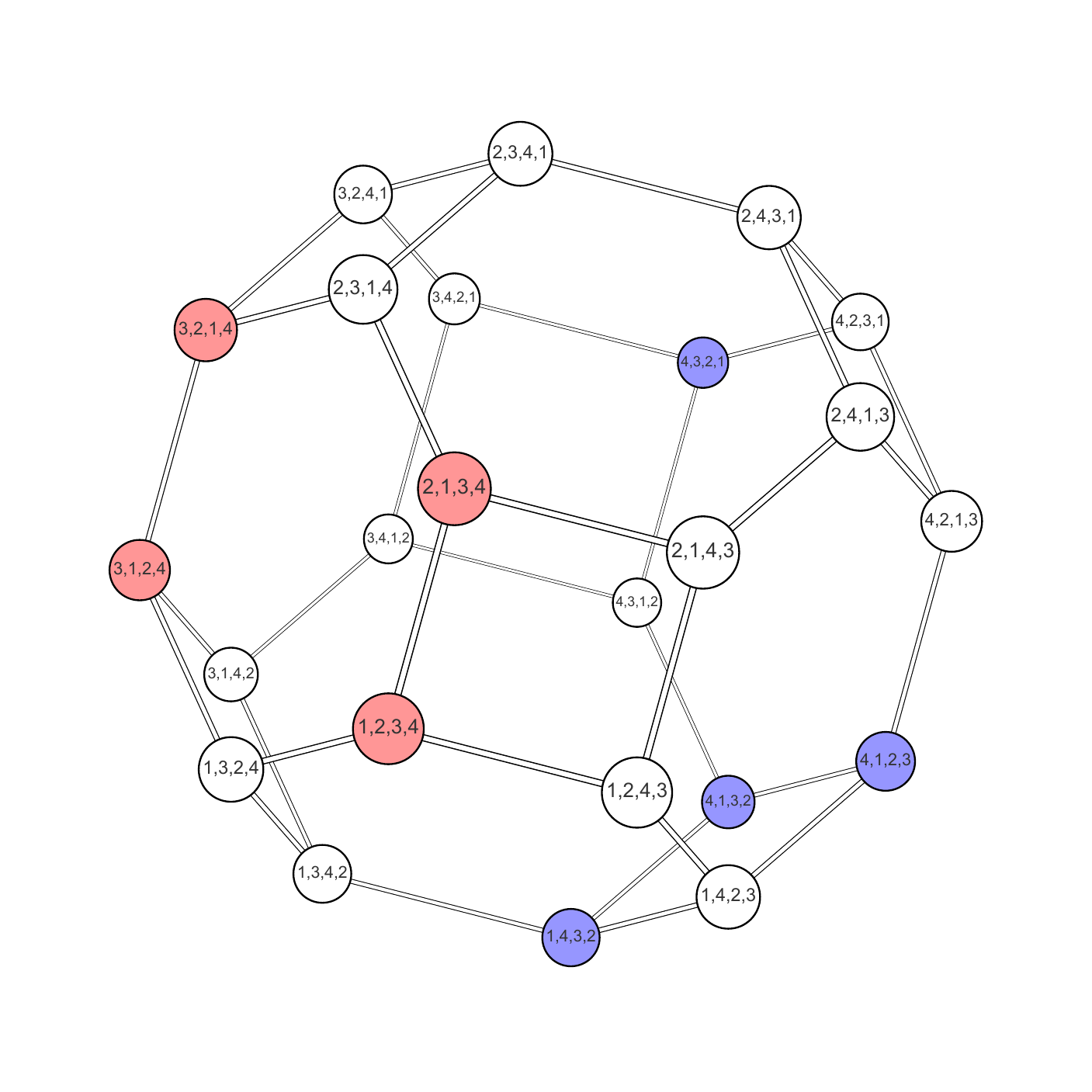}
\caption{\label{fig:different} The permutohedron of order 4 showing two sets of four vertices (red and blue) that belong to different classes of masks. The red group has $\mathbb{P}_m = 4/10626$. The blue group has $\mathbb{P}_m = 8/10626$. }
\end{figure}

When $n=4$ and $m=4$, the multiset of pairwise swap distances fails to characterize a class of vectors with same $\mathbb{P}_m$. Figure \ref{fig:different} shows two distinct groups of vertices that have the same multiset of swap distances, i.e. $\{1,1,2,2,3,3\}$, but fail to yield the same $\mathbb{P}_m$. 
Table \ref{tab:same_average_swap_distance_4_4} shows $\mathbb{P}_m$ and a representative of one the classes with same $\mathbb{P}_m$ for $n = m = 4$.

By definition, $\mathbb{P} \geq \mathbb{P}_=$.
We have $\mathbb{P}_= \geq \mathbb{P}_m$ because the former comprises a wider set of  permutations. 
We have $\mathbb{P}_m \geq \mathbb{P}_A$ instead of $\mathbb{P}_m = \mathbb{P}_A$ because 
$\mathbb{P}_A$ does not take into account that vertices that have zero probability are another source of symmetries. Indeed, Table \ref{tab:same_average_swap_distance} shows that $\mathbb{P}_A = \mathbb{P}_m$ requires $m = N$ or $m = N - 1$ when $n=3$.
\end{proof}

Figure \ref{fig:p_value_lower_bound} shows a negative correlation between $m$ and $\mathbb{P}_m$. Namely, the statistical power of the random permutation test decreases as $m$ decreases, which in turn can be caused by swap distance minimization or entropy minimization.

\begin{figure}
\centering
\includegraphics[width = 0.9 \textwidth]{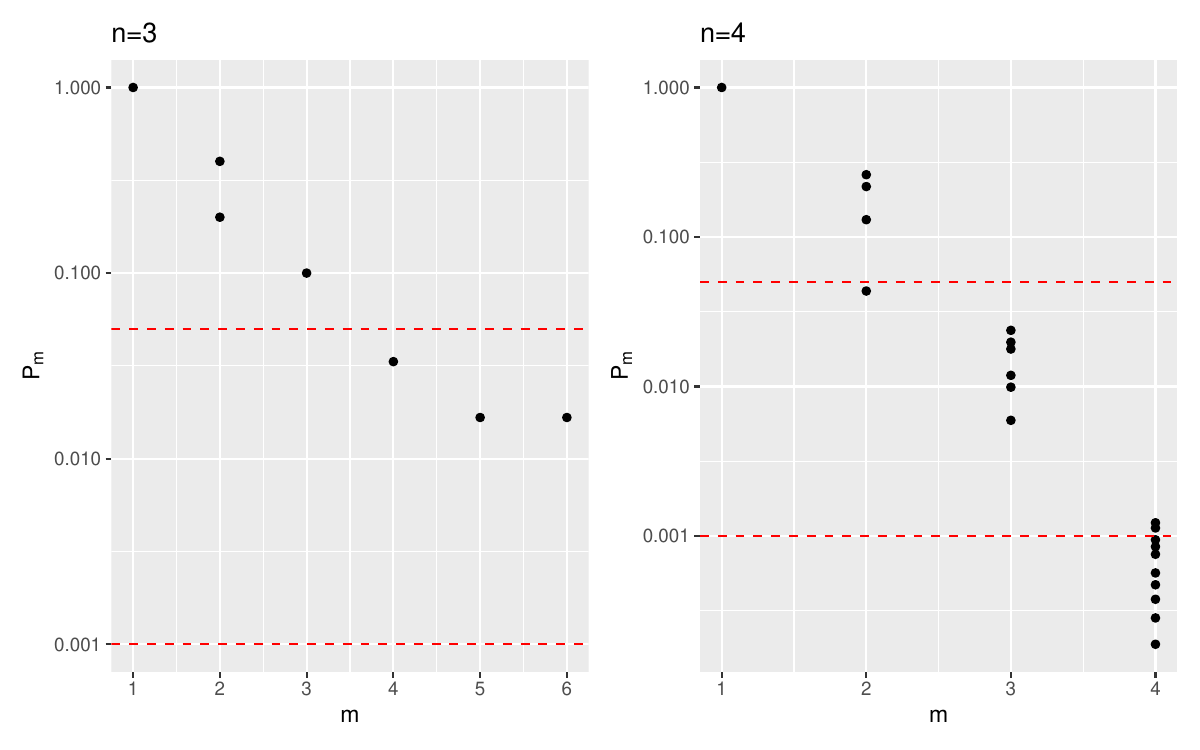}
\caption{\label{fig:p_value_lower_bound} $\mathbb{P}_m$ versus $m$, the number of non-zero probabilities, for $n=3$ (left) and $n=4$ (right). Every point corresponds to a class of probability vectors where $\mathbb{P}_m$ is the same for every member of the class. The red dashed lines indicate the significance levels of $0.05$ and $0.001$, respectively. Data are borrowed from Tables \ref{tab:same_average_swap_distance} and \ref{tab:same_average_swap_distance_4_4}. }
\end{figure}

\begin{table}
\caption{\label{tab:same_average_swap_distance} $\mathbb{P}_m$ as a function of $m$ for $1 \leq m \leq N$ for each class of binary masks. Every class is identified by a multiset of pairwise swap distances for the range of values of $n$ and $m$ shown in this table. $-$ indicates that all classes of masks have the same $\mathbb{P}_m$. Often, distinct classes (represented by distinct multisets) have the same $\mathbb{P}_m$. }
\centering
\begin{tabular}{llll}
\hline
$n$ & \multicolumn{1}{l}{$m$}                  & \multicolumn{1}{l}{$\mathbb{P}_m$} & \multicolumn{1}{l}{Multisets of swap distances}                          \\ \hline
3 & \multicolumn{1}{l}{1}                  & \multicolumn{1}{l}{1}                                & \multicolumn{1}{l}{-}                                              \\ 
& \multicolumn{1}{l}{2} & \multicolumn{1}{l}{1/5}                              & \multicolumn{1}{l}{$\{3\}$.}      \\ 
& \multicolumn{1}{l}{}                   & \multicolumn{1}{l}{2/5}                              & \multicolumn{1}{l}{$\{1\}$, $\{2\}$.} \\ 
& \multicolumn{1}{l}{3}                  & \multicolumn{1}{l}{1/10}                             & \multicolumn{1}{l}{-}                                              \\ 
& \multicolumn{1}{l}{4}                  & \multicolumn{1}{l}{1/30}                             & \multicolumn{1}{l}{-}                                              \\ 
& \multicolumn{1}{l}{5}                  & \multicolumn{1}{l}{1/60}                             & \multicolumn{1}{l}{-}                                              \\ 
& \multicolumn{1}{l}{6}                  & \multicolumn{1}{l}{1/60}                             & \multicolumn{1}{l}{-}                                              \\ 

4 & 1                  & 1           & -                                                  \\ 
&2 & 1/23        & $\{6\}$.          \\ 
&  & 3/23        & $\{1\}$, $\{5\}$     \\ 
&  & 5/23        & $\{2\}$, $\{4\}$     \\ 
&  & 6/23        & $\{3\}$          \\ 
&3 & 3/506       & $\{1, 5, 6\}$                              \\ 
&  & 5/506       & $\{2, 4, 6\}$                              \\ 
&  & 6/506       & $\{1, 1, 2\}$, $\{1, 4, 5\}$, $\{2, 5, 5\}$, $\{3, 3, 6\}$ \\ 
&  & 9/506       & $\{1, 2, 3\}$, $\{1, 3, 4\}$, $\{2, 3, 5\}$, $\{3, 4, 5\}$ \\ 
&  & 10/506      & $\{2, 2, 2\}$, $\{2, 2, 4\}$, $\{2, 4, 4\}$, $\{4, 4, 4\}$ \\ 
&  & 12/506      & $\{2, 3, 3\}$, $\{3, 3, 4\}$                   \\ 
\hline
\end{tabular}
\end{table}

\begin{table}
\caption{\label{tab:same_average_swap_distance_4_4} $\mathbb{P}_m$ for the classes of binary masks that result with $n=m=4$. For each distinct $\mathbb{P}_m$, the representative of only one of the classes is shown as a set of $m$ vertices of the permutohedron of order 4. }
\centering
\begin{tabular}{llll}
\hline
$n$ & \multicolumn{1}{l}{$m$}                  & \multicolumn{1}{l}{$\mathbb{P}_m$} & \multicolumn{1}{l}{Representative of the class}                          \\ \hline
4 & 4 & 2/10626  & 1,2,3,4; 2,1,3,4; 1,2,4,3; 2,1,4,3 \\
& & 3/10626  & 1,2,3,4; 2,1,3,4; 4,3,1,2; 4,3,2,1 \\
& & 4/10626  & 1,2,3,4; 2,1,3,4; 3,1,2,4; 3,2,1,4 \\
& & 5/10626  & 1,2,3,4; 2,3,1,4; 4,1,3,2; 4,3,2,1 \\
& & 6/10626  & 1,2,3,4; 2,1,3,4; 1,3,2,4; 1,2,4,3 \\
& & 8/10626  & 1,2,3,4; 2,1,3,4; 1,3,2,4; 1,4,2,3 \\
& & 9/10626  & 1,2,3,4; 2,1,3,4; 3,1,2,4; 4,2,1,3 \\
& & 10/10626 & 1,2,3,4; 2,1,3,4; 1,3,2,4; 2,3,1,4 \\
& & 12/10626 & 1,2,3,4; 2,1,3,4; 1,3,2,4; 2,3,4,1 \\
& & 13/10626 & 1,2,3,4; 2,1,3,4; 3,1,2,4; 2,4,1,3 \\
\hline
\end{tabular}
\end{table}

\end{document}